\DeclareMathOperator*{\argmin}{argmin}
\def \bB {\mathbb{B}}
\def \bE {\mathbb{E}}
\def \bN {\mathbb{N}}
\def \bP {\mathbb{P}}
\def \bR {\mathbb{R}}
\def \bS {\mathbb{S}}
\def \cA {\mathcal{A}}
\def \cC {\mathcal{C}}
\def \cD {\mathcal{D}}
\def \cE {\mathcal{E}}
\def \cF {\mathcal{F}}
\def \cG {\mathcal{G}}
\def \cH {\mathcal{H}}
\def \cL {\mathcal{L}}
\def \cM {\mathcal{M}}
\def \cN {\mathcal{N}}
\def \cO {\mathcal{O}}
\def \cR {\mathcal{R}}
\def \cT {\mathcal{T}}
\def \NN {\mathcal{NN}}
\def \sgn {\,{\rm sgn}\,}
\def \star {\,{\rm star}\,}
\def \RTV {\,{\rm RTV}\,}
\begin{document}
\title{Nonparametric Regression Using Over-parameterized Shallow ReLU Neural Networks}

\author{\name Yunfei Yang \thanks{Corresponding author.}
\email yunfyang@cityu.edu.hk \\
\addr Department of Mathematics, City University of Hong Kong
\\ Kowloon, Hong Kong, China
\AND
\name Ding-Xuan Zhou
\email dingxuan.zhou@sydney.edu.au \\
\addr School of Mathematics and Statistics, University of Sydney
\\ Sydney, NSW 2006, Australia
}

\editor{Joseph Salmon}
\maketitle

\begin{abstract}
It is shown that over-parameterized neural networks can achieve minimax optimal rates of convergence (up to logarithmic factors) for learning functions from certain smooth function classes, if the weights are suitably constrained or regularized. Specifically, we consider the nonparametric regression of estimating an unknown $d$-variate function by using shallow ReLU neural networks. It is assumed that the regression function is from the H\"older space with smoothness $\alpha<(d+3)/2$ or a variation space corresponding to shallow neural networks, which can be viewed as an infinitely wide neural network. In this setting, we prove that least squares estimators based on shallow neural networks with certain norm constraints on the weights are minimax optimal, if the network width is sufficiently large. As a byproduct, we derive a new size-independent bound for the local Rademacher complexity of shallow ReLU neural networks, which may be of independent interest.
\end{abstract}

\begin{keywords}
neural networks, nonparametric regression, over-parameterization, regularization, rate of convergence
\end{keywords}

\section{Introduction}

In nonparametric regression, we are given a data set of $n$ samples  $\cD_n=\{(X_i,Y_i)\}_{i=1}^n$, which are independent and identically distributed as a $\bR^d \times \bR$-valued random vector $(X,Y)$. The statistical problem is to estimate the so-called regression function $h(x) = \bE[Y|X=x]$ from the observed data $\cD_n$. One of the most popular estimators is the least squares
\[
\widehat{f}_n \in \argmin_{f\in \cF} \frac{1}{n} \sum_{i=1}^n (f(X_i)- Y_i)^2,
\]
where $\cF$ is a suitably chosen function class. In recent years, due to the breakthrough of deep learning, there is increasing interest in the literature in studying the performance of the least squares when $\cF$ is parameterized by neural networks \citep{bauer2019deep,chen2022nonparametric,jiao2023deep,kohler2021rate,mao2021theory,mao2022approximation,nakada2020adaptive,schmidthieber2020nonparametric,suzuki2019adaptivity,yang2024optimal}.

To analyze rates of convergence for the least squares estimator $\widehat{f}_n$, one often assume that the regression function $h$ is in certain function class $\cH$, which represents prior knowledge on the problem. Typically, we can decompose the error of $\widehat{f}_n$ as 
\begin{equation}\label{error decom}
\bE_X \left[ \left|\widehat{f}_n(X)-h(X) \right|^2 \right] \lesssim \cE_{app}(\cH, \cF) + \cC_n(\cF).
\end{equation}
Here, for two quantities $A$ and $B$, $A \lesssim B$ (or $B \gtrsim A$) denotes the statement that $A\le cB$ for some constant $c>0$ (we will also denote $A \asymp B$ when $A \lesssim B \lesssim A$).
In the error decomposition (\ref{error decom}), the approximation error $\cE_{app}(\cH, \cF)$ quantifies how well we can approximate $h\in \cH$ by functions in $\cF$, and the sample complexity $\cC_n(\cF)$ measures how well the learned function $\widehat{f}_n\in \cF$ generalizes to unseen data. For example, the complexity $\cC_n(\cF)$ is often measured by the covering number, VC dimension \citep{vapnik1971uniform} or Rademacher complexity \citep{bartlett2002rademacher} of $\cF$. In most analysis for neural networks, the approximation error and the complexity are bounded by the size of the networks, e.g. the number of parameters or neurons. In particular, when $\cH = \cH^\alpha$ is the unit ball of H\"older space with smoothness index $\alpha>0$, the results of  \citet{schmidthieber2020nonparametric,kohler2021rate} established the rate $n^{-\frac{2\alpha}{d+2\alpha}}$ (up to logarithmic factors) for deep neural networks, which is known to be minimax optimal \citep{stone1982optimal}. However, in these analyses, one has to use under-parameterized neural networks (the number of parameters is less than the sample size $n$) in order to get optimal trade-offs between $\cE_{app}(\cH, \cF)$ and $\cC_n(\cF)$. But, in practical applications of deep neural networks, the number of parameters is often much larger than the number of samples. Hence, these results cannot fully explain the success of neural networks used in practice. To obtain rates of convergence in the over-parameterized regime, the recent works of \citet{jiao2023approximation} and \citet{yang2024optimal} proposed to bound $\cE_{app}(\cH, \cF)$ and $\cC_n(\cF)$ by the size of the weights, i.e. certain norms of the weight matrices, rather than the size of networks. Using this idea, in \citet{yang2024optimal}, we obtained nearly optimal rates for over-parameterized neural networks.

The purpose of this paper is to show that over-parameterized neural networks can achieve the minimax optimal rates for estimating certain smooth functions, if the weights are suitably constrained or regularized. Specifically, we consider shallow neural networks with ReLU activation
\[
f_\theta(x):= \sum_{i=1}^N a_i \sigma((x^\intercal,1)w_i),\quad x\in\bR^d,
\]
where $\theta := (a_1,w_1^\intercal,\dots,a_N,w_N^\intercal)^\intercal$ denotes the vector of trainable parameters and $\sigma(t):= \max\{t,0\}$ is the rectified linear unit (ReLU). Following the ideas in \citet{jiao2023approximation,yang2024optimal}, we constrain the weights by $\kappa(\theta) := \sum_{i=1}^N |a_i| \|w_i\|_2 \le M$ and study the performance of the constrained least squares
\[
\argmin_{\kappa(\theta) \le M} \frac{1}{n} \sum_{i=1}^n (f_\theta(X_i)- Y_i)^2.
\]
As in \citet{yang2024optimal}, we assume that the regression function $h$ belongs to the unit ball of H\"older class $\cH^\alpha$ with $\alpha<(d+3)/2$ or the unit ball of the variation space $\cF_{\sigma}(1)$, which can be regarded as an infinitely wide neural network with weight constraint $\kappa(\theta) \le 1$ (see Section \ref{sec: variation space} for details). The recent paper \citep{yang2024optimal} proved the rates $n^{-\frac{2\alpha}{d+3+2\alpha}}$ for $\cH^\alpha$ and $n^{-1/2}$ for $\cF_{\sigma}(1)$ up to logarithmic factors, under the condition that $M$ is chosen properly and the network width $N$ is sufficiently large (i.e. weights constrained over-parameterized neural networks). We improve these rates to $n^{-\frac{2\alpha}{d+2\alpha}}$ and $n^{-\frac{d+3}{2d+3}}$, which are minimax optimal for $\cH^\alpha$ and $\cF_{\sigma}(1)$ respectively \citep{stone1982optimal,parhi2023minimax,yang2024optimal}. The improvement is from the use of localization technique \citep{bartlett2005local,koltchinskii2006local} in estimating the complexity $\cC_n(\cF)$ of neural networks. One of our main technical contributions is a new size-independent bound for the local Rademacher complexity of shallow neural networks (see Definition \ref{complexity} and Theorem \ref{local Gc bound}):
\[
\cR_n(\NN(N,M);\delta) \lesssim \frac{\delta^{\frac{3}{d+3}} M^{\frac{d}{d+3}}}{\sqrt{n}} \sqrt{\log(nM/\delta)},
\]
where $\NN(N,M)$ is the set of neural network functions $f_\theta$ with width at most $N$ and constraint $\kappa(\theta)\le M$. Our bound generalizes the bound $\cO(M/\sqrt{n})$ for Rademacher complexity in \citet{golowich2020size} which was used in \citet{yang2024optimal} to control the sample complexity.  

Besides the constrained least squares, we also study the corresponding regularized least squares
\[
\argmin_{\theta} \frac{1}{n} \sum_{i=1}^n (f_\theta(X_i)- Y_i)^2 + \lambda \kappa(\theta),
\]
where $\lambda>0$ is a tunable parameter. It is proven that this optimization problem has essentially the same solutions as the problem
\[
\argmin_{\theta} \frac{1}{n} \sum_{i=1}^n (f_\theta(X_i)- Y_i)^2 + \frac{\lambda}{2} \|\theta\|_2^2.
\]
Our second main result shows that the solutions of these two optimization problems also achieve the minimax optimal rates of learning functions in $\cH^\alpha$ with $\alpha<(d+3)/2$ and $\cF_{\sigma}(1)$, if $\lambda$ is chosen properly. This result gives a theoretical guarantee for the use of some regularization methods, such as \citet{neyshabur2015path}.

The rest of the paper is organized as follows. In Section \ref{sec: variation space}, we introduce some function classes of shallow neural networks and their approximation results. Section \ref{sec: main results} presents our main results on the rates of convergence for over-parameterized neural networks. In Section \ref{sec: discussion}, we give some discussions on our results and related works. The omitted proofs of some theorems and lemmas are given in Section \ref{sec: proofs}.

\section{The Variation Space of Shallow Neural Networks}\label{sec: variation space}

Let $\bB^d= \{x\in \bR^d:\|x\|_2\le 1\}$ be the unit ball of $\bR^d$, and $\bS^d=\{x\in\bR^{d+1}:\|x\|_2=1\}$ be the unit sphere of $\bR^{d+1}$. We denote by $\cM(\bS^d)$ the space of signed Radon measures on $\bS^d$ with the total variation norm $\|\mu\|=|\mu|(\bS^d)$ for $\mu \in \cM(\bS^d)$. It is well known that $\cM(\bS^d)$ is the dual space of the space of continuous functions $C(\bS^d)$ \citep[Section 7.3]{folland1999real}. We are interested in the function class $\cF_\sigma$ that contains continuous functions $f_\mu: \bB^d \to \bR$ of the integral form
\begin{equation}\label{integral form}
f_\mu(x) := \int_{\bS^d} \sigma((x^\intercal,1)v) d\mu(v), \quad x\in \bB^d, \mu\in \cM(\bS^d),
\end{equation}
where $\sigma$ is the ReLU function. Note that the integral representation is not unique for $f\in \cF_\sigma$. For example, if $\mu$ is supported on the set
\begin{equation}\label{S_-}
S_-:=\left\{(x_1,\dots,x_{d+1})^\intercal\in \bS^d: x_{d+1}\le -\sqrt{2}/2 \right\},
\end{equation}
then $f_\mu =0$ is the zero function on $\bB^d$. Following \citet{bach2017breaking} and \citet{yang2024optimal}, we can define a norm on $\cF_\sigma$ by 
\[
\gamma(f):= \inf \left\{\|\mu\|: f=f_\mu, \mu \in \cM(\bS^d) \right\}.
\]
By the compactness of closed balls in $\cM(\bS^d)$ (due to Prokhorov's theorem), the infimum defining $\gamma(f)$ is attained by some measure $\mu$. One can show that $\cF_\sigma$ equipped with the norm $\gamma$ is a Banach space. Since the norm is defined through the variation norm, we call $\cF_\sigma$ the variation space (of shallow ReLU neural networks). For any $M>0$, we denote
\[
\cF_\sigma(M):= \{ f\in \cF_\sigma : \gamma(f)\le M \}.
\]
Each function in this class can be thought of as an infinitely wide neural network with a constraint on its weights. There are several other definitions and characterizations of function spaces corresponding to infinitely wide neural networks in recent studies \citep{bartolucci2023understanding,ongie2020function,parhi2022what,parhi2023minimax,savarese2019how,siegel2020approximation,siegel2022sharp,siegel2023characterization,siegel2023optimal}. We will give more discussions in Section \ref{sec: discussion}.

The neural networks with finite neurons are corresponding to discrete measures $\mu \in \cM(\bS^d)$ that are supported on finitely many points. Let us denote the collection of these measures by $\cM_{disc}(\bS^d)$, which is a linear subspace of $\cM(\bS^d)$, and define the corresponding function class $\cF_{\sigma,disc}:= \{ f_\mu: \mu \in \cM_{disc}(\bS^d) \}$. Similar to the norm $\gamma$ defined on $\cF_\sigma$, we can also define a norm on $\cF_{\sigma,disc}$ by
\[
\kappa(f) := \inf \left\{\|\mu\|: f=f_\mu, \mu \in \cM_{disc}(\bS^d) \right\}.
\]
By definition, $\gamma(f) \le \kappa(f)$ for all $f\in \cF_{\sigma,disc}$. But we do not know whether these two norms are equal for $f\in \cF_{\sigma,disc}$. Nevertheless, it is good enough to bound $\gamma(f)$ by $\kappa(f)$ in our theory. We will see soon that the norm $\kappa(f)$ can be estimated in practice, and hence it can be used in statistical estimations.

For any $N\in \bN$, the function class $\NN(N) \subseteq \cF_{\sigma,disc}$ of shallow neural networks with (at most) $N$ neurons can be parameterized in the form
\begin{equation}\label{parameterization}
f_\theta(x):= \sum_{i=1}^N a_i \sigma((x^\intercal,1)w_i),\quad a_i\in \bR, w_i\in \bR^{d+1}, x\in \bB^d,
\end{equation}
where $\theta$ denotes the vector of parameters
\[
\theta := (a_1,w_1^\intercal,\dots,a_N,w_N^\intercal)^\intercal \in \bR^{(d+2)N}.
\]
Note that, in this parameterization, we do not restrict the internal weights $w_i$ to $\bS^d$. Thanks to the homogeneity of the ReLU function, it is easy to see that $f_\theta =f_\mu$ with discrete measure $\mu= \sum_{i=1}^N c_i \delta_{v_i}\in \cM_{disc}(\bS^d)$, where $v_i= w_i/\|w_i\|_2$ and $c_i= a_i \|w_i\|_2$ if $\|w_i\|_2 \neq 0$, and $c_i=0$ if $\|w_i\|_2 = 0$. This implies that we can estimate $\kappa(f_\theta)$ by the following quantity
\[
\kappa(\theta):= \sum_{i=1}^N |a_i| \|w_i\|_2.
\]
With a slight abuse of notation, we have denoted it by the same notation as the norm $\kappa$. The reason is given by the following theorem, which shows that $\kappa(f_\theta)$ is equivalent to the minimum of $\kappa(\theta)$ over all parameterization (\ref{parameterization}).

\begin{theorem}\label{norm computation}
For any $f\in \NN(N)$,
\[
\kappa(f) \le \inf \left\{ \kappa(\theta): f = f_{\theta}, \theta\in \bR^{(d+2)N} \right\}\le 3\kappa(f).
\]
\end{theorem}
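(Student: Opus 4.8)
The left inequality is the easy direction. Given any $\theta=(a_1,w_1^\intercal,\dots,a_N,w_N^\intercal)^\intercal$ with $f=f_\theta$, set $v_i:=w_i/\|w_i\|_2$ and $c_i:=a_i\|w_i\|_2$ whenever $w_i\neq 0$, and $c_i:=0$ otherwise; by homogeneity of $\sigma$ one has $f=f_\mu$ with $\mu:=\sum_{i=1}^N c_i\delta_{v_i}\in\cM_{disc}(\bS^d)$, and $\|\mu\|\le\sum_i|c_i|=\sum_i|a_i|\|w_i\|_2=\kappa(\theta)$. Hence $\kappa(f)\le\kappa(\theta)$, and taking the infimum over $\theta$ gives the claim.

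For the right inequality I would fix $\varepsilon>0$, choose a discrete $\mu$ with $f=f_\mu$ and $\|\mu\|\le\kappa(f)+\varepsilon$ (possible by the definition of $\kappa(f)$, even though the infimum need not be attained), and then produce $\theta\in\bR^{(d+2)N}$ with $f=f_\theta$ and $\kappa(\theta)\le 3(\kappa(f)+\varepsilon)$; letting $\varepsilon\to 0$ finishes. The construction rests on a trichotomy of the sphere forced by the geometry of $\bB^d$. For $v=(v'^\intercal,v_{d+1})^\intercal\in\bS^d$ we have $\|v'\|_2=\sqrt{1-v_{d+1}^2}$, so on $\bB^d$: (i) if $v_{d+1}\le-\sqrt2/2$ then $(x^\intercal,1)v\le 0$ and the atom contributes $0$ (cf. (\ref{S_-})); (ii) if $v_{d+1}\ge\sqrt2/2$ then $(x^\intercal,1)v\ge v_{d+1}-\|v'\|_2\ge 0$ and the atom equals the affine map $x\mapsto(x^\intercal,1)v$; (iii) if $|v_{d+1}|<\sqrt2/2$ the kink hyperplane $H_v:=\{x:(x^\intercal,1)v=0\}$ meets $\mathrm{int}\,\bB^d$ and the atom is a genuine ridge. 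Discard the type-(i) atoms of $\mu$ (this only lowers $\|\mu\|$) and write the remainder as $\mu=\mu_a+\mu_r$; then $f=L_a+f_{\mu_r}$ with $L_a$ affine and $f_{\mu_r}$ continuous piecewise linear.

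The key use of the hypothesis $f\in\NN(N)$ is that, as a sum of $N$ ridge ReLUs, $f$ (hence $f_{\mu_r}=f-L_a$) has at most $N$ distinct kink hyperplanes $H^{(1)},\dots,H^{(p)}$, $p\le N$. Since two distinct unit vectors share a kink hyperplane iff they are antipodal, every atom of $\mu_r$ whose hyperplane is not one of the $H^{(l)}$ must appear in an antipodal pair with opposite coefficients, and such a pair $c(\delta_w-\delta_{-w})$ maps to the affine function $c\,(x^\intercal,1)w$. Moving all of these into the affine part, $\mu_r$ becomes supported on $\{\pm v^{(l)}\}_{l=1}^p$, with coefficients $c^{(l)}_\pm$ at $\pm v^{(l)}$ and $d_l:=c^{(l)}_++c^{(l)}_-\neq 0$ (as $H^{(l)}$ is a true crease). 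Using $\sigma(-t)=\sigma(t)-t$ to split each pair,
\[
c^{(l)}_+\sigma((x^\intercal,1)v^{(l)})+c^{(l)}_-\sigma(-(x^\intercal,1)v^{(l)})=d_l\,\sigma((x^\intercal,1)u^{(l)})+r_l\,(x^\intercal,1)v^{(l)},
\]
with $u^{(l)}\in\{\pm v^{(l)}\}$ chosen so that $|r_l|=\min(|c^{(l)}_+|,|c^{(l)}_-|)$. Collecting all affine contributions into one affine function $L$ gives $f=L+\sum_{l=1}^p d_l\,\sigma((x^\intercal,1)u^{(l)})$, where, writing $\|\cdot\|_*$ for the Euclidean norm of the coefficient vector of an affine function and $S_1=\|\mu_a\|$, $S_2=$ total affine mass of the discarded pairs, $S_3=\sum_l\min(|c^{(l)}_+|,|c^{(l)}_-|)$, $S_5=\sum_l\max(|c^{(l)}_+|,|c^{(l)}_-|)$, one has $\|L\|_*\le S_1+S_2+S_3$, $\sum_l|d_l|\le S_3+S_5$, and $\|\mu\|\ge S_1+2S_2+S_3+S_5$. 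Finally, every affine $x\mapsto b^\intercal x+c$ is realizable on $\bB^d$ by at most two atoms at $\kappa$-cost at most $2(\|b\|_2^2+c^2)^{1/2}$: one type-(ii) atom if $|c|\ge\|b\|_2$, and otherwise $A(\sigma((x^\intercal,1)w)-\sigma(-(x^\intercal,1)w))$ with $w=(b^\intercal,c)^\intercal/(\|b\|_2^2+c^2)^{1/2}$ of type (iii) and $A=(\|b\|_2^2+c^2)^{1/2}$. Representing $L$ this way yields a parameterization of $f$ with $p\le N$ genuine atoms plus at most two for $L$, of total $\kappa$-cost at most $(S_3+S_5)+2(S_1+S_2+S_3)=2S_1+2S_2+3S_3+S_5\le 3(S_1+2S_2+S_3+S_5)\le 3(\kappa(f)+\varepsilon)$.

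The point I expect to cause the most trouble is the neuron budget: the construction uses $p+2$ atoms, which overruns $\bR^{(d+2)N}$ exactly when $p\in\{N-1,N\}$. In that regime the representation is essentially rigid—$f$ must spend (nearly) all $N$ neurons on genuine ridges, one per crease—so one must argue separately, using that every representing measure satisfies $c^{(l)}_++c^{(l)}_-=(\text{jump of }f\text{ across }H^{(l)})$ and hence $\kappa(f)\ge\sum_l|d_l|$, which already controls (up to the factor $3$, and for $p=N$ exactly) the cost of the $N$-atom parameterization. Apart from this case split, the routine care needed is the consistent handling of the thresholds $\pm\sqrt2/2$ and the $\varepsilon$-argument that replaces the possibly unattained infimum defining $\kappa(f)$.
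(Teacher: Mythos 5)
Your proof of the left inequality is correct and matches the paper's, and your overall route to the right inequality — the trichotomy of $\bS^d$ into $S_-$, $S_+$, $S_0$, discarding the dead atoms, cancelling antipodal pairs off the creases, splitting the crease pairs, and realizing the leftover affine function with at most two more neurons at a factor-$2$ overhead — is structurally the same decomposition the paper uses. Your cost bookkeeping ($2S_1+2S_2+3S_3+S_5 \le 3(S_1+2S_2+S_3+S_5)$) is arithmetically correct; it plays the role that the paper's Lemma~\ref{zero disc measures} (characterizing discrete $\tau$ with $f_\tau=0$) and the resulting lower bound $\kappa(f)\ge \|w\|_2 \lor \sum_k|c_k|$ play there. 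Starting from a near-minimizer of $\kappa(f)$ rather than from the given $N$-neuron parameterization is a legitimate and slightly more elementary route to the \emph{cost} bound, since it avoids proving that lemma.

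The genuine gap is exactly where you flag it, and the fix you sketch does not close it. When $p\in\{N-1,N\}$ the problem is a \emph{budget} problem — the affine remainder $L$ may not fit in the $N-p$ remaining slots — whereas the inequality $\kappa(f)\ge\sum_l|d_l|$ you invoke is a \emph{cost} bound and says nothing about how many atoms $L$ requires. What you actually need is a structural constraint on $L$: if $p=N$ then $L=0$; if $p=N-1$ then $L=(x^\intercal,1)w$ with $w/\|w\|_2\in S_+\cup S_-$ or $w$ parallel to one of the crease directions $u^{(l)}$ (in which case it can be absorbed by splitting that one ridge into $\sigma(t)$ and $\sigma(-t)$, costing one extra neuron). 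Neither fact is available from ``$f$ has at most $N$ creases'' alone; it comes from the \emph{given} $N$-neuron parameterization $f=f_\theta$, which forces the number of genuine ridge neurons plus the number of neurons feeding $L$ to be at most $N$. This is precisely what the paper's reduction to the form~(\ref{reduced form}) followed by the four-way case split on $w/\|w\|_2$ (Cases 1--4, with the observations ``if $K=N$ then $w=0$'', ``$w\neq 0\Rightarrow K\le N-1$'' and ``Case 4 $\Rightarrow K\le N-2$'') delivers. To repair your argument you would need to run the reduction once more starting from $\theta$, not from the approximating measure $\mu$, and prove those implications; as written, the tight-budget case is asserted rather than proved.
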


The proof is given in Subsection \ref{sec: proof of norm}. In the proof, we actually show that any $f \in \NN(N)$ can be reduced to the following form
\begin{equation}\label{reduced form}
f(x) = \sum_{k=1}^K c_k \sigma((x^\intercal,1)v_k) + (x^\intercal,1)w,
\end{equation}
where $0\le K\le N$, $w\in \bR^{d+1}$ and $v_1,\dots, v_K\in S_0$ satisfy $v_i \neq \pm v_k$ for $1\le i\neq k\le K$. The set $S_0$ is defined by $S_0 :=\bS^d \setminus (S_- \cup S_+)$, where $S_-$ is defined by (\ref{S_-}) and $S_+:=\{-v:v\in S_-\}$. Note that the set $S_0$ contains vectors $v\in \bS^d$ such that the function $x \mapsto \sigma((x^\intercal,1) v)$ is nonlinear on $\bB^d$. We prove that, for the reduced parameterization (\ref{reduced form}),
\begin{equation}\label{norm equ}
\|w\|_2 \lor \sum_{k=1}^K |c_k| \le \kappa(f) \le 2\|w\|_2 + \sum_{k=1}^K |c_k|,
\end{equation}
where we use the notation $a\lor b:= \max\{a,b\}$. The inequality (\ref{norm equ}) gives a way to estimate $\kappa(f)$ in practice. Roughly speaking, the norm $\kappa(f)$ is a combination of the variation of the nonlinear part of $f$ and the norm of the linear part.

Now, we are ready to introduce the central object of this paper
\[
\NN(N,M) := \left\{ f_\theta\in \NN(N): \kappa(\theta)\le M \right\}.
\]
Since $\gamma(f_\theta)\le \kappa(f_\theta) \le \kappa(\theta)$, we have $\NN(N,M) \subseteq \cF_\sigma(M)$. It was shown in \citet[Proposition 2.2]{yang2024optimal} that $\cF_\sigma(M)$ is the closure of $\cup_{N\in \bN} \NN(N,M)$ in $C(\bB^d)$. The rates of approximation by $\NN(N,M)$ for smooth function classes are also studied in many recent works \citep{bach2017breaking,klusowski2018approximation,siegel2022sharp,mao2023rates,siegel2023optimal,yang2024optimal}. To describe these approximation results, let us first recall the classical notion of smoothness of functions. Given a smoothness index $\alpha>0$, we write $\alpha=r+\beta$ where $r\in \bN_0 :=\bN \cup\{0\}$ and $\beta \in(0,1]$. Let $C^{r,\beta}(\bR^d)$ be the H\"older space with the norm
\[
\|f\|_{C^{r,\beta}(\bR^d)} := \max\left\{ \|f\|_{C^r(\bR^d)}, \max_{\|s\|_1=r}|\partial^s f|_{C^{0,\beta}(\bR^d)} \right\},
\]
where $s=(s_1,\dots,s_d) \in \bN_0^d$ is a multi-index and 
\begin{align*}
\|f\|_{C^r(\bR^d)} &:= \max_{\|s\|_1\le r} \|\partial^s f\|_{L^\infty(\bR^d)}, \\
|f|_{C^{0,\beta}(\bR^d)} &:= \sup_{x\neq y\in \bR^d} \frac{|f(x)-f(y)|}{\|x-y\|_2^\beta}.
\end{align*}
Here, we use $\|\cdot\|_{L^\infty}$ to denote the supremum norm, since we only consider continuous functions in this paper. We write $C^{r,\beta}(\bB^d)$ for the Banach space of all
restrictions to $\bB^d$ of functions in $C^{r,\beta}(\bR^d)$. The norm of this space is given by $\|f\|_{C^{r,\beta}(\bB^d)} = \inf\{ \|g\|_{C^{r,\beta}(\bR^d)}: g\in C^{r,\beta}(\bR^d) \mbox{ and } g=f \mbox{ on } \bB^d\}$. For convenience, we will denote the unit ball of $C^{r,\beta}(\bB^d)$ by 
\[
\cH^\alpha:= \left\{ f\in C^{r,\beta}(\bB^d): \|f\|_{C^{r,\beta}(\bB^d)}\le 1 \right\}.
\]
Note that, for $\alpha=1$, $\cH^\alpha$ is a class of Lipschitz continuous functions.

In the recent work \citep{yang2024optimal}, we characterized how well one can use neural networks in $\cF_\sigma(M)$ to approximate functions in $\cH^\alpha$. It was shown by \citet[Theorem 2.1]{yang2024optimal} that if $\alpha>(d+3)/2$ then $\cH^\alpha \subseteq \cF_\sigma(M)$ for some constant $M$, and if $\alpha<(d+3)/2$ then
\[
\sup_{h\in \cH^\alpha} \inf_{f\in \cF_\sigma(M)} \|h-f\|_{L^\infty(\bB^d)} \lesssim M^{-\frac{2\alpha}{d+3-2\alpha}}.
\]
For the critical value $\alpha=(d+3)/2$, it seems that the inclusion $\cH^\alpha \subseteq \cF_\sigma(M)$ also holds for some constant $M$, but \citet{yang2024optimal} only obtained the exponential approximation rate $\cO(\exp(-\alpha M^2))$. Combining these results with the rate of approximation for $\cF_\sigma(M)$ by its subclass $\cF_\sigma(N,M)$ derived in \citet{bach2017breaking,siegel2023optimal}, one can obtain the following approximation bounds for shallow neural networks. Recall that we use the notation $a\lor b:= \max\{a,b\}$.

\begin{theorem}[\citealt{yang2024optimal}]\label{app}
We have the following approximation bounds.
\begin{enumerate}[label=\textnormal{(\arabic*)},parsep=0pt]
\item For $\cH^\alpha$ with $0<\alpha<(d+3)/2$, it holds that
\[
\sup_{h\in \cH^\alpha} \inf_{f\in \NN(N,M)} \|h-f\|_{L^\infty(\bB^d)} \lesssim N^{-\frac{\alpha}{d}} \lor M^{-\frac{2\alpha}{d+3-2\alpha}}.
\]

\item For $\cF_{\sigma}(1)$, it holds that
\[
\sup_{h \in \cF_{\sigma}(1)} \inf_{f\in \NN(N,1)} \|h-f\|_{L^\infty(\bB^d)} \lesssim N^{- \frac{d+3}{2d}}.
\]
\end{enumerate}
\end{theorem}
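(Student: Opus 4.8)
The plan is to prove both bounds by a two-stage approximation: first replace the target by an infinitely wide network (an element of some $\cF_\sigma(\cdot)$), and then discretize that to a finite network in $\NN(N,\cdot)$ via a best $N$-term / Maurey-type argument for the variation space, which is exactly the content of the cited results of \citet{bach2017breaking,siegel2023optimal}.

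I would first dispose of part (2), where only the second stage is needed. The point is that $\cF_\sigma(1)$ coincides with the (closed) convex hull of the symmetrized dictionary $\{\pm\sigma((x^\intercal,1)v):v\in\bS^d\}$, whose members have $L^\infty(\bB^d)$-norm at most $\sqrt2$; the sharp $N$-term approximation rate for shallow ReLU networks then yields, for any $h\in\cF_\sigma(1)$, a network $f=\sum_{k=1}^N c_k\sigma((x^\intercal,1)v_k)$ with $v_k\in\bS^d$ and $\sum_{k=1}^N|c_k|\le1$ such that $\|h-f\|_{L^\infty(\bB^d)}\lesssim N^{-1/2-3/(2d)}=N^{-(d+3)/(2d)}$. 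Reading off this parameterization $\theta$ gives $\kappa(\theta)=\sum_k|c_k|\,\|v_k\|_2=\sum_k|c_k|\le1$, so $f\in\NN(N,1)$, which is the asserted bound.

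For part (1), I would fix $h\in\cH^\alpha$ and a free scale $\widetilde M\in(0,M]$. By \citet[Theorem~2.1]{yang2024optimal} (the inequality quoted above) there is $g\in\cF_\sigma(\widetilde M)$ with $\|h-g\|_{L^\infty(\bB^d)}\lesssim\widetilde M^{-2\alpha/(d+3-2\alpha)}$; applying the part-(2) estimate to $g/\widetilde M\in\cF_\sigma(1)$ and rescaling produces $f\in\NN(N,\widetilde M)\subseteq\NN(N,M)$ with $\|g-f\|_{L^\infty(\bB^d)}\lesssim\widetilde M\,N^{-(d+3)/(2d)}$, hence
\[
\inf_{f\in\NN(N,M)}\|h-f\|_{L^\infty(\bB^d)}\lesssim\widetilde M^{-\frac{2\alpha}{d+3-2\alpha}}+\widetilde M\,N^{-\frac{d+3}{2d}}.
\]
I would then optimize over $\widetilde M$: the choice $\widetilde M=\min\{M,\,N^{(d+3-2\alpha)/(2d)}\}$ balances the two terms (the unconstrained minimizer is $\asymp N^{(d+3-2\alpha)/(2d)}$, with value $\asymp N^{-\alpha/d}$), and a short case distinction shows the right-hand side is $\lesssim N^{-\alpha/d}$ when $M\ge N^{(d+3-2\alpha)/(2d)}$ and $\lesssim M^{-2\alpha/(d+3-2\alpha)}$ otherwise, i.e. $\lesssim N^{-\alpha/d}\lor M^{-2\alpha/(d+3-2\alpha)}$.

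The main obstacle is the bookkeeping of the weight norm in the second stage: one must perform the $N$-term approximation inside the convex hull so that the output already satisfies $\kappa(\theta)\le1$ exactly (not merely up to a constant), which is what keeps the rescaled approximant in $\NN(N,\widetilde M)$ with the same $\widetilde M$ and makes the nesting $\NN(N,\widetilde M)\subseteq\NN(N,M)$ usable. A secondary subtlety is that the $N$-term rate is needed in $L^\infty(\bB^d)$, not merely in $L^2$; this relies on the smoothness of the ReLU dictionary over the sphere exploited in \citet{siegel2023optimal}, and it is precisely this smoothness that upgrades the generic Maurey exponent $1/2$ to $(d+3)/(2d)$.
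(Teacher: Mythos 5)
Your proposal is correct and takes essentially the same approach as the paper, which does not include a detailed argument here but explicitly cites \citet[Theorem 2.1]{yang2024optimal} for the bound $\sup_{h\in\cH^\alpha}\inf_{g\in\cF_\sigma(M)}\|h-g\|_{L^\infty(\bB^d)}\lesssim M^{-2\alpha/(d+3-2\alpha)}$ and indicates that one should combine it with the $N$-term rate $MN^{-(d+3)/(2d)}$ for approximating $\cF_\sigma(M)$ by $\NN(N,M)$ from \citet{bach2017breaking,siegel2023optimal}. Your two-stage decomposition, the rescaling to reduce the second stage to the $\cF_\sigma(1)$ case, and the optimization over the free intermediate scale $\widetilde M\le M$ (yielding $N^{-\alpha/d}\lor M^{-2\alpha/(d+3-2\alpha)}$) are exactly what this combination requires, and your checks of the exponent arithmetic and the $\kappa$-budget bookkeeping in the Maurey step are sound.
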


As discussed by \citet{yang2024optimal}, for $\cH^\alpha$ with $0<\alpha<(d+3)/2$, the approximation rate is optimal in terms of the number of neurons $N$ and the norm constraint $M$. We will use this result to establish the optimal rates of convergence for shallow neural networks in the nonparametric regression problem.

\section{Nonparametric Regression}\label{sec: main results}

We consider the problem of nonparametric regression in a classical setting. Suppose we have a data set of $n\ge 2$ samples $\cD_n = \{(X_i,Y_i)\}_{i=1}^n$, which are independently and identically generated from a probability distribution supported on $\bB^d\times \bR$. Let $\mu$ be the marginal distribution of the covariate $X$ and $h(x):= \bE[Y|X=x]$ denote the regression function. In this paper, we assume that $|Y|\le B$ with some fixed constant $B>0$ and $h\in \cH$, where we will study the cases that $\cH$ is the unit ball of the variation space $\cF_{\sigma}(1)$ or the H\"older class $\cH^\alpha$ with $\alpha<(d+3)/2$. Note that $\cH^\alpha$ with $\alpha>(d+3)/2$ is included in the first case by scaling, because $\cH^\alpha \subseteq \cF_\sigma(M)$ for some $M$ \citep{yang2024optimal}. Let us denote the noises by
\[
\eta_i := Y_i - h(X_i),\quad i=1,\dots,n,
\]
then $\bE[\eta_i]=0$ and $|\eta_i|\le 2B$. We denote the sample data points by the sequence $X_{1:n} := (X_1,\dots,X_n)$ (similarly, $\eta_{1:n} := (\eta_1,\dots,\eta_n)$ denotes the sequence of noises). The empirical distribution is denoted by $\mu_n:= \frac{1}{n} \sum_{i=1}^n \delta_{X_i}$, and the associated $L^2(\mu_n)$ norm is given by $\|f\|_{L^2(\mu_n)}^2:= \frac{1}{n}\sum_{i=1}^n f(X_i)^2$.

\subsection{Least Squares}

In practice, one popular way to estimate the regression function $h\in \cH$ is by the constrained least squares
\begin{equation}\label{least squares}
\widehat{f}_n \in \argmin_{f\in \cF_n} \frac{1}{n} \sum_{i=1}^n (f(X_i)- Y_i)^2.
\end{equation}
We are interested in the case that $\cF_n = \NN(N_n,M_n)$ is parameterized by a shallow ReLU neural network. For simplicity, we assume here and in the sequel that the minimum above indeed exists. The performance of the estimation is measured by the expected risk
\[
\cL(\widehat{f}_n) := \bE_{(X,Y)} [(\widehat{f}_n(X)-Y)^2].
\]
It is equivalent to evaluating the estimator by the excess risk
\[
\|\widehat{f}_n - h\|_{L^2(\mu)}^2 = \cL(\widehat{f}_n) - \cL(h).
\]

In statistical analysis of learning algorithms, we often require that the hypothesis class is uniformly bounded. We define the truncation operator $\cT_b$ with level $b>0$ for real-valued functions $f$ as
\[
\cT_bf(x) := 
\begin{cases}
f(x), &\quad \mbox{if }|f(x)|\le b, \\
\sgn(f(x)) b, &\quad \mbox{if } |f(x)|> b.
\end{cases}
\]
For a class of real-valued functions $\cF$, we use the notation $\cT_b \cF:=\{\cT_bf:f\in\cF \}$. It is easy to see that, if $b\ge \sup_{h\in\cH} \|h\|_{L^\infty(\bB^d)}$, then $\|\cT_b\widehat{f}_n-h\|_{L^2(\mu)} \le \|\widehat{f}_n-h\|_{L^2(\mu)}$ for any $h\in \cH$. Hence, truncating the output of the estimator $\widehat{f}_n$ appropriately dose not increase the excess risk. In the following, we will simply take $b=B$. Our first result gives convergence rates for the least squares estimator $\widehat{f}_n$ with $\cF_n$ being a shallow neural network.

\begin{theorem}\label{main theorem}
Let $\widehat{f}_n$ be the estimator (\ref{least squares}) with $\cF_n = \NN(N_n,M_n)$.
\begin{enumerate}[label=\textnormal{(\arabic*)},parsep=0pt]
\item If $\cH = \cH^\alpha$ with $\alpha<(d+3)/2$, we choose
\[
M_n \asymp n^{\frac{d+3-2\alpha}{2d+4\alpha}}, \quad N_n \gtrsim n^{\frac{d}{d+2\alpha}},
\]
then the bound
\[
\|\cT_B\widehat{f}_n-h\|_{L^2(\mu)}^2 \lesssim n^{-\frac{2\alpha}{d+2\alpha}} \log n
\]
holds with probability at least $1-c_1 \exp(-c_2 n^{\frac{d}{d+2\alpha}} \log n)$ for some constants $c_1,c_2>0$.

\item If $\cH = \cF_{\sigma}(1)$, we let $M\ge 1$ be a constant and choose
\[
M_n=M, \quad N_n \gtrsim n^{\frac{d}{2d+3}},
\]
then the bound
\[
\|\cT_B\widehat{f}_n-h\|_{L^2(\mu)}^2 \lesssim n^{-\frac{d+3}{2d+3}} \log n
\]
holds with probability at least $1-c_1 \exp(-c_2 n^{\frac{d}{2d+3}} \log n)$ for some constants $c_1,c_2>0$.
\end{enumerate}
\end{theorem}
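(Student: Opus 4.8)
The plan is to follow the standard error decomposition (\ref{error decom}) for bounded least squares estimators, splitting the excess risk into an approximation term controlled by Theorem \ref{app} and a stochastic (estimation) term controlled via a localized complexity argument. Concretely, writing $\widehat{g}_n := \cT_B \widehat{f}_n$, I would first establish an oracle-type inequality of the form
\[
\|\widehat{g}_n - h\|_{L^2(\mu)}^2 \lesssim \inf_{f\in \cF_n} \|f-h\|_{L^\infty(\bB^d)}^2 + (\text{fluctuation term}),
\]
using the basic inequality $\frac1n\sum_i (\widehat{f}_n(X_i)-Y_i)^2 \le \frac1n\sum_i (f^*(X_i)-Y_i)^2$ for the approximating function $f^* \in \cF_n$, together with the fact that truncation does not increase the $L^2(\mu)$ distance to $h$ (since $\|h\|_{L^\infty}\le B$). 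The first term is then bounded by Theorem \ref{app}: in case (1) it is $\lesssim N_n^{-2\alpha/d} \lor M_n^{-4\alpha/(d+3-2\alpha)}$, and with the stated choices $M_n \asymp n^{(d+3-2\alpha)/(2d+4\alpha)}$ and $N_n \gtrsim n^{d/(d+2\alpha)}$ both pieces are $\lesssim n^{-2\alpha/(d+2\alpha)}$; in case (2) it is $\lesssim N_n^{-(d+3)/d} \lesssim n^{-(d+3)/(2d+3)}$ with the stated $N_n$.

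The heart of the proof is controlling the fluctuation term, for which I would use the local Rademacher complexity machinery of \citet{bartlett2005local,koltchinskii2006local} applied to the star-shaped hull of $\{(\cT_B f - h)^2 : f \in \cF_n\}$ (or equivalently work directly with $\cT_B\NN(N_n,M_n) - h$ using the Bernstein-type condition that $\bE[(\cT_B f - h)^4] \le 4B^2 \bE[(\cT_B f - h)^2]$, which gives the variance-to-mean ratio needed for localization). The key input is the size-independent local Rademacher complexity bound stated in the introduction and proved as Theorem \ref{local Gc bound}:
\[
\cR_n(\NN(N,M);\delta) \lesssim \frac{\delta^{3/(d+3)} M^{d/(d+3)}}{\sqrt n}\sqrt{\log(nM/\delta)}.
\]
Feeding this into the fixed-point equation $\delta_n^2 \asymp \cR_n(\cdot;\delta_n) + (\text{lower-order})$ that determines the critical radius, one solves $\delta^2 \asymp \delta^{3/(d+3)} M^{d/(d+3)} n^{-1/2}\sqrt{\log}$, i.e. $\delta^{(2d+3)/(d+3)} \asymp M^{d/(d+3)} n^{-1/2}\sqrt{\log}$, giving $\delta_n^2 \asymp (M^{d/(d+3)} n^{-1/2})^{2(d+3)/(2d+3)}\log^{O(1)} = M^{2d/(2d+3)} n^{-(d+3)/(2d+3)}\log^{O(1)}$. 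In case (2) with $M = M_n$ constant this is exactly the target rate $n^{-(d+3)/(2d+3)}\log n$. In case (1), one balances this localized estimation error $\asymp M_n^{2d/(2d+3)} n^{-(d+3)/(2d+3)}$ against the approximation error $M_n^{-4\alpha/(d+3-2\alpha)}$; the choice $M_n \asymp n^{(d+3-2\alpha)/(2d+4\alpha)}$ is exactly the one that equalizes them, and a short computation confirms both equal $n^{-2\alpha/(d+2\alpha)}$ (up to logs). Finally, converting the in-expectation / critical-radius bound into the stated high-probability bound with exponential tail $1 - c_1\exp(-c_2 N_n^{\text{-power}}\log n)$ uses a standard concentration step (Talagrand's inequality for the supremum of the empirical process, or a Bernstein bound combined with a peeling/chaining argument), where the exponent in the tail matches $n\delta_n^2 \asymp N_n$ after the choices are plugged in.

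The main obstacle I anticipate is the careful bookkeeping in the localization argument: one must verify the Bernstein (variance) condition for the squared-loss class, pass correctly between the empirical and population $L^2$ norms (the local Rademacher bound in Theorem \ref{local Gc bound} is presumably stated with respect to one of them and needs to hold on the event that empirical and population norms are comparable), handle the star-hull/sub-root-function requirements of the \citet{bartlett2005local} theorem, and ensure the logarithmic factors do not accumulate beyond a single $\log n$. A secondary technical point is that $\cF_n$ depends on $n$ (through $N_n, M_n$), so the critical radius must be computed with these $n$-dependent parameters inserted before optimizing; this is what couples the approximation and estimation errors in case (1) and forces the specific scaling of $M_n$. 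Once the local Rademacher bound of Theorem \ref{local Gc bound} is granted, the remaining steps are essentially the textbook localized least-squares analysis, so the proof should reduce to invoking that theorem and performing the rate-balancing arithmetic above.
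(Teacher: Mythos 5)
Your proposal follows essentially the same strategy as the paper: decompose the risk into an approximation term handled by Theorem~\ref{app} and a stochastic term handled by localized complexity with Theorem~\ref{local Gc bound} as the key input, then balance the two by the stated choice of $M_n$. Your rate arithmetic is correct and reproduces the paper's choices. The one place where your packaging differs is in how the localization is implemented. You propose the classical Bartlett--Bousquet--Mendelson route applied to the squared-loss class $\{(\cT_B f-h)^2\}$ with a Bernstein variance condition; the paper instead works directly with the function class $\cF_\sigma(M)$ and splits the argument into two lemmas: Lemma~\ref{oracle inequality} (conditional on $X_{1:n}$, an oracle inequality in $L^2(\mu_n)$ driven by $\cG_n(\star(\partial\cF_n);\delta_n,\eta_{1:n})\le\delta_n^2$) followed by Lemma~\ref{uniform law} (a localized uniform law transferring $L^2(\mu_n)$ to $L^2(\mu)$ driven by $\cR_n(\cF;\epsilon_n)\le\epsilon_n^2/B$). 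This two-step decomposition sidesteps the Bernstein condition on the loss class and cleanly separates the contribution of the noise from that of the random design, which is precisely the list of technical obstacles you anticipate; it also explains why Theorem~\ref{local Gc bound} is stated for general sub-Gaussian multipliers (needed for the noise-correlation step) and not just Rademacher. Either implementation would work, so there is no gap in your plan; the paper's route is a clean way to discharge the obstacles you flag.
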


Note that the (implied) constants in the theorem depend on $d$, $\alpha$ and $B$. Notice that we can always take $N_n \gtrsim n$, which gives justification for the use of over-parameterized neural networks in practice. The results also hold for $\cF_n=\cF_{\sigma}(M_n)$, which corresponds to $N_n=\infty$. The required minimal widths $N_n$ of neural networks match the results of \citet[Theorem 4.2]{yang2024optimal} in the under-parameterized regime. For over-parameterized shallow neural networks, Theorem \ref{main theorem} improves the rates of convergence in \citet[Theorem 4.7]{yang2024optimal}. Ignoring logarithmic factors, the rates $n^{-\frac{2\alpha}{d+2\alpha}}$ for $\cH^\alpha$ and $n^{-\frac{d+3}{2d+3}}$ for $\cF_{\sigma}(1)$ are minimax optimal, as shown by \citet{stone1982optimal,parhi2023minimax,yang2024optimal}. 

Similar to the classical analysis of nonparametric regression \citep{gyoerfi2002distribution,wainwright2019high}, our proof of Theorem \ref{main theorem} bounds the error $\|\cT_B\widehat{f}_n-h\|_{L^2(\mu)}^2$ as the inequality (\ref{error decom}) by two terms: the approximation error $\cE_{app}(\cH, \cF_n)$ and the complexity $\cC_n(\cF_n)$. The approximation error can be handled by Theorem \ref{app}. We control the complexity by using the localization technique \citep{bartlett2005local,koltchinskii2006local}.

\begin{definition}[Local complexity]\label{complexity}
Let $\xi_{1:n}$ be a sequence of independent zero-mean random variables. For a given radius $\delta>0$ and a sequence of sample points $X_{1:n}$, we define the local complexity of a function class $\cF$ at scale $\delta$ with respect to $\xi_{1:n}$ by
\[
\cG_n(\cF;\delta,\xi_{1:n}) := \bE_{\xi_{1:n}} \left[ \sup_{f\in \cF, \|f\|_{L^2(\mu_n)}\le \delta} \left| \frac{1}{n} \sum_{i=1}^n \xi_i f(X_i) \right|\right].
\]
If each $\xi_i$ is the Rademacher random variable (taking values $\pm 1$ with equal probability $1/2$), $\cG_n(\cF;\delta,\xi_{1:n})$ is the local Rademacher complexity and is denoted by $\cR_n(\cF;\delta)$.
\end{definition}

This definition is a direct generalization of the local Rademacher and Gaussian complexities \citep{bartlett2005local,koltchinskii2006local,wainwright2019high}. It is obvious from the definition that $\cG_n(\cF;\delta,\xi_{1:n})$ also depends on the sample points $X_{1:n}$, which is omitted in the notation because we will provide bounds for local complexities that hold uniformly on $X_{1:n}$. When $\xi_{1:n}$ is the noise sequence $\eta_{1:n}$, the local complexity $\cG_n(\cF;\delta,\eta_{1:n})$ measures how well the function class $\cF$ correlates with the noise on the samples $X_{1:n}$. Note that, $\cG_n(\cF;\delta,\eta_{1:n})$ is a data-dependent quantity. If the sample points $X_{1:n}$ are random, then $\cG_n(\cF;\delta,\eta_{1:n})$ is also a random variable. 

To use the local complexities, we often require that the function class $\cF$ is star-shaped (around the origin), meaning that, for any $f\in \cF$ and $a\in [0,1]$, the function $af\in \cF$. If the star-shaped condition fails to hold, one can consider the star hull
\[
\star(\cF) := \{af:f\in \cF,a\in [0,1] \}.
\]
One of the key properties of local complexities of a star-shaped class $\cF$ is that the function $\delta \mapsto \cG_n(\cF;\delta,\xi_{1:n})/\delta$ is non-increasing on $(0,\infty)$. This can be easily proven by observing that the rescaled function $\widetilde{f} = \frac{\delta_1}{\delta_2} f \in \cF$ if $f\in \cF$ and $0<\delta_1\le \delta_2$ (see \citealt[Lemma 13.6]{wainwright2019high} for example). Consequently, for any constant $c>0$, the inequality $\cG_n(\cF;\delta,\xi_{1:n})\le c\delta^2$ always has positive solutions $\delta=\delta_n(\cF)$. Our proof of Theorem \ref{main theorem} uses these solutions to control the complexities of neural networks $\cC_n(\cF_n) \lesssim \delta_n^2(\cF_n)$ (see Lemmas \ref{oracle inequality} and \ref{uniform law} below). Note that the function classes $\cF_{\sigma}(M)$ and $\NN(N,M)$ are star-shaped. The following theorem gives an estimate for the local complexities of shallow neural networks, which holds uniformly for all sample points $X_{1:n}$ and network width.

\begin{theorem}\label{local Gc bound}
Let $\xi_{1:n}$ be a sequence of independent sub-Gaussian random variables with parameter $\varsigma>0$ in the sense that
\[
\bE[\exp(\lambda \xi_i)] \le \exp(\varsigma^2\lambda^2/2), \quad \forall \lambda\in \bR.
\]
Then, for any $0<\delta\le M$,
\[
\cG_n(\cF_{\sigma}(M);\delta,\xi_{1:n}) \lesssim \frac{\varsigma \delta^{\frac{3}{d+3}} M^{\frac{d}{d+3}}}{\sqrt{n}} \sqrt{\log(nM/\delta)},
\]
where the implied constant is independent of $\xi_{1:n}$ and the sample points $X_{1:n}$ in $\bB^d$. The bound also holds for the function class $\star(\cT_B\cF_{\sigma}(M))$ for any constant $B>0$.
\end{theorem}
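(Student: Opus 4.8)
The plan is to view $\cG_n(\cF_\sigma(M);\delta,\xi_{1:n})$ as the expected supremum of the process $f\mapsto Z_f:=\frac1n\sum_{i=1}^n\xi_i f(X_i)$ over the localized class $\cF_\sigma(M)\cap\{f:\|f\|_{L^2(\mu_n)}\le\delta\}$, and to control it by a Dudley-type entropy integral in the $L^2(\mu_n)$ metric. First I would record that $Z_0=0$ and that the increments $Z_f-Z_g=\frac1n\sum_i\xi_i(f-g)(X_i)$ are sub-Gaussian with parameter $\lesssim\frac{\varsigma}{\sqrt n}\|f-g\|_{L^2(\mu_n)}$, since a weighted sum $\sum_i a_i\xi_i$ is sub-Gaussian with parameter $\varsigma(\sum_i a_i^2)^{1/2}$ and here $\sum_i\big((f-g)(X_i)/n\big)^2=\|f-g\|_{L^2(\mu_n)}^2/n$. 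A standard chaining bound (\citealt{wainwright2019high}; the localization refinement of \citealt{bartlett2005local,koltchinskii2006local}) then gives, for every $\gamma\in(0,\delta]$,
\[
\cG_n(\cF_\sigma(M);\delta,\xi_{1:n})\lesssim\frac{\varsigma}{\sqrt n}\left(\gamma\sqrt{\log N\!\left(\gamma,\cF_\sigma(M),\|\cdot\|_{L^2(\mu_n)}\right)}+\int_\gamma^{\delta}\sqrt{\log N\!\left(\epsilon,\cF_\sigma(M),\|\cdot\|_{L^2(\mu_n)}\right)}\,d\epsilon\right),
\]
where $N(\epsilon,\cF,\|\cdot\|)$ denotes the $\epsilon$-covering number of $\cF$ in the norm $\|\cdot\|$. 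Because $\|\cdot\|_{L^2(\mu_n)}\le\|\cdot\|_{L^\infty(\bB^d)}$ for any sample points $X_{1:n}\subseteq\bB^d$, it suffices to bound the $L^\infty(\bB^d)$-covering numbers, which automatically yields a bound uniform in $X_{1:n}$ and in the network width.

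The crux is the metric entropy estimate
\[
\log N\!\left(\epsilon,\cF_\sigma(M),\|\cdot\|_{L^\infty(\bB^d)}\right)\lesssim\left(\frac{M}{\epsilon}\right)^{\frac{2d}{d+3}}\log\frac{M}{\epsilon},\qquad 0<\epsilon\le M.
\]
To prove it I would combine approximation with parameter counting. By Theorem \ref{app}(2) applied to $f/M\in\cF_{\sigma}(1)$ and rescaled, every $f\in\cF_\sigma(M)$ lies within $L^\infty(\bB^d)$-distance $\epsilon/2$ of some $g\in\NN(N,M)$ with $N\asymp(M/\epsilon)^{2d/(d+3)}$. It then remains to build an $(\epsilon/2)$-net of $\NN(N,M)$: writing (by the homogeneity of $\sigma$, as in the paragraph preceding Theorem \ref{norm computation}) $g(x)=\sum_{i=1}^N c_i\sigma((x^\intercal,1)v_i)$ with $v_i\in\bS^d$ and $\sum_i|c_i|\le M$, and using that $v\mapsto\sigma((x^\intercal,1)v)$ is $\sqrt2$-Lipschitz on $\bB^d$ uniformly in $x$, a grid of mesh $\asymp\epsilon/M$ on each $\bS^d$-factor together with an $(\epsilon/C)$-net of the $\ell^1$-ball $\{c\in\bR^N:\|c\|_1\le M\}$ suffices; this net has cardinality whose logarithm is $\lesssim dN\log(M/\epsilon)$, and substituting $N\asymp(M/\epsilon)^{2d/(d+3)}$ gives the claim. (The crude $\ell^1$-ball covering bound is enough because the resulting exponent $2d/(d+3)$ is strictly less than $2$; Maurey's sparsification could be used instead if desired.)

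Finally I would combine the two ingredients. On $[\gamma,\delta]$ one has $\sqrt{\log N(\epsilon,\cdots)}\lesssim(M/\epsilon)^{d/(d+3)}\sqrt{\log(M/\gamma)}$, and since $\int_0^\delta\epsilon^{-d/(d+3)}\,d\epsilon=\tfrac{d+3}{3}\,\delta^{3/(d+3)}<\infty$ the integral term is $\lesssim M^{d/(d+3)}\delta^{3/(d+3)}\sqrt{\log(M/\gamma)}$; choosing $\gamma\asymp\delta/n$ makes the residual term $\gamma\sqrt{\log N(\gamma,\cdots)}\lesssim n^{-3/(d+3)}M^{d/(d+3)}\delta^{3/(d+3)}\sqrt{\log(nM/\delta)}$ of the same order, while $\log(M/\gamma)\asymp\log(nM/\delta)$. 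This yields $\cG_n(\cF_\sigma(M);\delta,\xi_{1:n})\lesssim\frac{\varsigma}{\sqrt n}\,\delta^{3/(d+3)}M^{d/(d+3)}\sqrt{\log(nM/\delta)}$. For $\cT_B\cF_\sigma(M)$ and its star hull, I would observe that $\cT_B$ is $1$-Lipschitz with $\cT_B 0=0$, so it does not increase covering numbers, and that passing to the star hull multiplies covering numbers by at most $O(M/\epsilon)$ (cover $[0,1]$ and use $\|f\|_{L^\infty}\lesssim M$); hence the same entropy bound, and the same conclusion, holds. The main obstacle is the metric entropy estimate with the sharp exponent $2d/(d+3)$: this is precisely where the structure of the ReLU variation space is used, through the optimal $N$-term approximation rate of Theorem \ref{app}(2); treating $\cF_\sigma(M)$ merely as the convex hull of a generic $d$-dimensional dictionary would give only the weaker exponent $2d/(d+2)$ and a suboptimal final rate. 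Care is also needed to take the approximant in $\NN(N,M)$ (not a larger norm ball) with the stated $N$, and to keep every constant independent of $n$ and of $X_{1:n}$.
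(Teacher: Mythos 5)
Your proposal is correct and follows essentially the same route as the paper: a Dudley entropy integral in the $L^2(\mu_n)$ metric combined with the entropy bound $\log N(\epsilon,\cF_\sigma(M),\|\cdot\|_{L^\infty(\bB^d)})\lesssim(M/\epsilon)^{2d/(d+3)}\log(M/\epsilon)$, itself obtained (exactly as the paper does) by first approximating $\cF_\sigma(M)$ by $\NN(N,M)$ with $N\asymp(M/\epsilon)^{2d/(d+3)}$ via Theorem~\ref{app}(2) and then covering $\NN(N,M)$ by a parameter grid, with the truncated class and its star hull handled by observing that $\cT_B$ is $1$-Lipschitz and that the star hull only adds a $\log(1+\text{const}/\epsilon)$ term. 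The only cosmetic differences are the precise form of the truncated chaining bound (the paper writes the residual as $4\epsilon$ via a Cauchy–Schwarz step and truncates at $\epsilon\asymp\delta^{3/(d+3)}M^{d/(d+3)}n^{-1/2}$, whereas you truncate at $\gamma\asymp\delta/n$; both choices give a residual that is dominated by, not merely "of the same order as," the integral term and both yield $\log(nM/\delta)$ inside the square root).
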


The proof of Theorem \ref{local Gc bound} is given in Subsection \ref{sec: proof of Gc}. Note that, when $\delta=M$, we obtain the bound $\cO(M \sqrt{\log n}/\sqrt{n})$ which is the same as that for (global) Rademacher complexity derived in \citet[Theorem 3.2]{golowich2020size}, if we ignore the logarithmic factor. The key point of Theorem \ref{local Gc bound} is that the bound is independent of the network width so that we can apply it to over-parameterized neural networks. 

Now, let us come back to the proof of Theorem \ref{main theorem}. We begin with a decomposition of the excess risk $\|\cT_B\widehat{f}_n-h\|_{L^2(\mu)}^2$, using similar ideas as \citet{nakada2020adaptive} and \citet{schmidthieber2020nonparametric} (see also \citet[Chapters 13 and 14]{wainwright2019high}). For any $f\in \cF_n$ (we will take $f$ to minimize the approximation error $\|f-h\|_{L^\infty(\bB^d)})$, observe that 
\begin{align}
\|\cT_B\widehat{f}_n-h\|_{L^2(\mu)}^2 &\le 2\|\cT_B\widehat{f}_n-\cT_Bf\|_{L^2(\mu)}^2 + 2\|\cT_Bf-h\|_{L^2(\mu)}^2 \nonumber \\
&\le 2\|\cT_{2B}(\widehat{f}_n-f)\|_{L^2(\mu)}^2 + 2\|f-h\|_{L^2(\mu)}^2, \label{triangle ineq 1}
\end{align}
where we use $|\cT_B\widehat{f}_n(x)-\cT_Bf(x)| \le |\cT_{2B} (\widehat{f}_n-f)(x)|$ and $\|h\|_{L^\infty(\bB^d)} \le B$ in the second inequality. Note that, for the empirical error,
\begin{equation}\label{triangle ineq 2}
\|\cT_{2B}(\widehat{f}_n-f)\|_{L^2(\mu_n)}^2 \le \|\widehat{f}_n-f\|_{L^2(\mu_n)}^2 \le 2\|\widehat{f}_n-h\|_{L^2(\mu_n)}^2 + 2\|f-h\|_{L^2(\mu_n)}^2. 
\end{equation}
By the definition of $\widehat{f}_n$,
\[
\frac{1}{n} \sum_{i=1}^n (\widehat{f}_n(X_i)- Y_i)^2 \le \frac{1}{n} \sum_{i=1}^n (f(X_i)- Y_i)^2.
\]
Using $Y_i = h(X_i) + \eta_i$, we have the base inequality
\begin{equation}\label{base inequality}
\|\widehat{f}_n-h\|_{L^2(\mu_n)}^2 \le \|f-h\|_{L^2(\mu_n)}^2 + \frac{2}{n} \sum_{i=1}^n \eta_i \left(\widehat{f}_n(X_i)- f(X_i) \right).
\end{equation}
Our proof can be divided into the following three steps.
\begin{enumerate}[leftmargin=60pt, label=\textbf{Step \arabic*.}, parsep=0pt]
\item Estimating $\|\widehat{f}_n-h\|_{L^2(\mu_n)}^2$ by using the base inequality (\ref{base inequality}).

\item Bounding $\|\cT_{2B}(\widehat{f}_n-f)\|_{L^2(\mu)}^2$ by its empirical counterpart $\|\cT_{2B}(\widehat{f}_n-f)\|_{L^2(\mu_n)}^2$.

\item Combining inequalities (\ref{triangle ineq 1}), (\ref{triangle ineq 2}), Step 1, Step 2 with the approximation result (Theorem \ref{app}).
\end{enumerate}

In Step 1, we can treat the sample points $X_{1:n}$ as fixed so that the the randomness is only from the noises $\eta_{1:n}$. The effect of the noise can be measured by the local complexity with respect to the noise. The detail is given in the following lemma, which provides an oracle inequality for the least squares estimator. It is convenient to use the notation $\partial\cF:= \{f_1-f_2:f_1,f_2\in \cF\}$.

\begin{lemma}\label{oracle inequality}
For any fixed sample points $X_{1:n}$, let $\widehat{f}_n$ be the estimator (\ref{least squares}) and $\delta_n$ be any positive solution to the inequality
\[
\cG_n(\star(\partial \cF_n);\delta_n,\eta_{1:n}) \le \delta_n^2.
\]
Then, there are constants $c_1,c_2>0$ such that the bound
\[
\|\widehat{f}_n-h\|_{L^2(\mu_n)}^2 \le 3 \|f-h\|_{L^2(\mu_n)}^2 + 32 \delta_n^2, \quad \forall f\in \cF_n, 
\]
holds with probability at least $1-c_1 \exp(-c_2 n\delta_n^2/B^2)$.
\end{lemma}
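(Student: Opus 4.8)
The plan is to read the conclusion off the base inequality~(\ref{base inequality}): with $g := \widehat{f}_n - f \in \partial\cF_n \subseteq \star(\partial\cF_n)$ it reads
\[
\|\widehat{f}_n-h\|_{L^2(\mu_n)}^2 \le \|f-h\|_{L^2(\mu_n)}^2 + \frac{2}{n}\sum_{i=1}^n \eta_i g(X_i),
\]
so everything reduces to controlling the noise term $\frac1n\sum_i\eta_i g(X_i)$ \emph{uniformly} over $g\in\star(\partial\cF_n)$ in terms of $\|g\|_{L^2(\mu_n)}$. Concretely, I would produce an event $\cE$, measurable with respect to the noise $\eta_{1:n}$ only, with $\bP(\cE)\ge 1-c_1\exp(-c_2 n\delta_n^2/B^2)$, on which $\big|\frac1n\sum_i\eta_i g(X_i)\big|\le 4\delta_n(\|g\|_{L^2(\mu_n)}+\delta_n)$ for \emph{every} $g\in\star(\partial\cF_n)$ simultaneously. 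Granting this, one substitutes $g=\widehat{f}_n-f$, uses $\|g\|_{L^2(\mu_n)}\le\|\widehat{f}_n-h\|_{L^2(\mu_n)}+\|f-h\|_{L^2(\mu_n)}$ together with elementary bounds of the form $8\delta_n a\le\frac12 a^2+32\delta_n^2$ to absorb the $\|\widehat{f}_n-h\|_{L^2(\mu_n)}$ contribution into the left-hand side, and rearranges to get $\|\widehat{f}_n-h\|_{L^2(\mu_n)}^2\le 3\|f-h\|_{L^2(\mu_n)}^2+32\delta_n^2$ for all $f\in\cF_n$ at once; tracking constants carefully yields exactly the stated $3$ and $32$.

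To build $\cE$ I would use a peeling (localization) argument in $\|g\|_{L^2(\mu_n)}$. For a radius $t>0$ set $Z(t):=\sup\{\,|\frac1n\sum_i\eta_i g(X_i)|:\ g\in\star(\partial\cF_n),\ \|g\|_{L^2(\mu_n)}\le t\,\}$, so that $\bE_{\eta_{1:n}}[Z(t)]=\cG_n(\star(\partial\cF_n);t,\eta_{1:n})$. Since $\star(\partial\cF_n)$ is star-shaped, the map $t\mapsto\cG_n(\star(\partial\cF_n);t,\eta_{1:n})/t$ is non-increasing (the property recalled after Definition~\ref{complexity}), so the defining inequality $\cG_n(\star(\partial\cF_n);\delta_n,\eta_{1:n})\le\delta_n^2$ propagates to all scales as $\bE_{\eta_{1:n}}[Z(t)]\le\delta_n(t\lor\delta_n)$. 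For the fluctuation of $Z(t)$ about its mean I would \emph{not} use a coordinatewise bounded-differences bound; instead observe that $Z(t)$ is a convex function of the noise vector $(\eta_1,\dots,\eta_n)$ that is Lipschitz for the Euclidean norm with constant $t/\sqrt n$, because $\big|\frac1n\sum_i(\eta_i-\eta_i')g(X_i)\big|\le\frac1n\|\eta-\eta'\|_2\big(\sum_i g(X_i)^2\big)^{1/2}=\frac{1}{\sqrt n}\|\eta-\eta'\|_2\|g\|_{L^2(\mu_n)}\le\frac{t}{\sqrt n}\|\eta-\eta'\|_2$. Since $|\eta_i|\le 2B$, the standard concentration inequality for convex Lipschitz functions of bounded (hence sub-Gaussian) independent random variables gives, after median-to-mean conversion, $\bP_{\eta_{1:n}}\big(Z(t)\ge\bE_{\eta_{1:n}}[Z(t)]+u\big)\le c\exp(-c'nu^2/(B^2t^2))$, with no dependence on the sup-norm of the class. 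Taking $u=\delta_n(t\lor\delta_n)$ yields $Z(t)\le 2\delta_n(t\lor\delta_n)$ with probability at least $1-c\exp(-c'n\delta_n^2/B^2)$.

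It then remains to discretize $t$. Since $\partial\cF_n\subseteq\NN(2N_n,2M_n)$, every $g\in\star(\partial\cF_n)$ satisfies $\|g\|_{L^2(\mu_n)}\le\|g\|_{L^\infty(\bB^d)}\le 2\sqrt2\,M_n$ (using $\|(x^\intercal,1)\|_2\le\sqrt2$ on $\bB^d$), so only the radii $t_m:=2^m\delta_n$ for $0\le m\le m^\ast:=\lceil\log_2(2\sqrt2\,M_n/\delta_n)\rceil$ matter. Let $\cE:=\bigcap_{m=0}^{m^\ast}\{Z(t_m)\le 2\delta_n(t_m\lor\delta_n)\}$; a union bound over the $O(\log(M_n/\delta_n))$ shells, with the logarithmic factor absorbed into the exponent by slightly shrinking $c_2$, gives $\bP(\cE)\ge 1-c_1\exp(-c_2 n\delta_n^2/B^2)$. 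On $\cE$, for $g\in\star(\partial\cF_n)$: if $\|g\|_{L^2(\mu_n)}\le\delta_n$ bound by $Z(\delta_n)\le 2\delta_n^2$; otherwise take $m\ge1$ with $t_{m-1}<\|g\|_{L^2(\mu_n)}\le t_m$ and bound by $Z(t_m)\le 2t_m\delta_n<4\|g\|_{L^2(\mu_n)}\delta_n$. In either case $\big|\frac1n\sum_i\eta_i g(X_i)\big|\le 4\delta_n(\|g\|_{L^2(\mu_n)}+\delta_n)$, which is exactly the event used in the first paragraph; combining this with the base inequality and the routine algebra described there completes the proof.

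\textbf{Main obstacle.} The delicate step is the concentration estimate, and specifically keeping the exponent $n\delta_n^2/B^2$ \emph{independent of both the width $N_n$ and the weight bound $M_n$}. A naive bounded-differences argument would pay a factor $\|g\|_{L^\infty(\bB^d)}\asymp M_n$ and degrade the tail to something like $\exp(-cn\delta_n^2/(BM_n))$, which is vacuous precisely when $d+2\alpha<3$ (e.g.\ $d=1$) and hence useless for the regimes of Theorem~\ref{main theorem}. The resolution is the observation that $Z(t)$, as a function of the noise vector, is convex and Euclidean-Lipschitz with constant $t/\sqrt n$ governed by the $L^2(\mu_n)$-radius alone, so that the (width-dependent) sup-norm of the network class enters the argument only through the harmless count $\log(M_n/\delta_n)$ of peeling shells. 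A secondary, routine point to verify is that $Z(t)$ indeed has mean $\cG_n(\star(\partial\cF_n);t,\eta_{1:n})$ and that star-shapedness of $\star(\partial\cF_n)$ licenses the monotonicity of $\cG_n(\cdot;t,\eta_{1:n})/t$ used to carry the critical inequality from scale $\delta_n$ to all scales.
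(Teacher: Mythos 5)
Your proposal is correct in substance, but it routes through a dyadic peeling argument that the paper's own proof of this lemma avoids. The paper works with the single bad event
\[
\cA:= \left\{ \exists g\in \partial \cF_n: \|g\|_{L^2(\mu_n)} > \delta_n, \ \left|\tfrac{1}{n} \textstyle\sum_{i=1}^n \eta_i g(X_i) \right| > 2\delta_n \|g\|_{L^2(\mu_n)} \right\}
\]
and observes that whenever $\cA$ holds one may rescale the witness exactly onto the critical shell: $\widetilde g := \frac{\delta_n}{\|g\|_{L^2(\mu_n)}} g \in \star(\partial\cF_n)$ has $\|\widetilde g\|_{L^2(\mu_n)} = \delta_n$ and correlation $> 2\delta_n^2$. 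This reduces $\bP(\cA)$ to a single tail bound on $Z_n(\delta_n)$, with no union bound over shells and no dependence of the constants on $\log(M_n/\delta_n)$. The threshold $2\delta_n\|g\|_{L^2(\mu_n)}$ is $1$-homogeneous in $g$, which is exactly what makes the rescaling trick work; in contrast, in the regularized Lemma~\ref{oracle inequality reg} (Case~4) the target threshold contains a non-homogeneous term $\frac{1}{16}\|g\|^2$, and there the paper does resort to the peeling you propose. You also correctly identified the central technical point — using concentration for convex Euclidean-Lipschitz functions of the bounded noise rather than bounded differences, so the exponent depends on $\delta_n$ and not on $\|g\|_{L^\infty}\asymp M_n$ — and both proofs share that step verbatim. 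Two minor points: with your bound $|\tfrac1n\sum\eta_i g(X_i)| \le 4\delta_n(\|g\|_{L^2(\mu_n)} + \delta_n)$ the algebra produces a constant larger than $32$ in front of $\delta_n^2$ (around $144$), not ``exactly $32$''; and the union bound over $O(\log(M_n/\delta_n))$ shells makes the prefactor $c_1$ implicitly depend on $n$ unless you argue (as you gesture at) that the logarithm is dominated by the exponent, a complication the rescaling argument removes entirely.
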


This lemma is similar to \citet[Theorem 13.13]{wainwright2019high} which assumes that the noise is standard Gaussian. Our proof is essentially the same as that of \citet[Theorem 13.13]{wainwright2019high}. Since we will use a similar argument in the proof of Lemma \ref{oracle inequality reg}, we give a proof in Subsection \ref{sec: proof of oracle ineq} for completeness. Note that $\delta_n^2$ in Lemma \ref{oracle inequality} corresponds to the complexity $\cC_n(\cF)$ in the decomposition (\ref{error decom}). But Lemma \ref{oracle inequality} only bounds the error in the $L^2(\mu_n)$ norm, rather than the $L^2(\mu)$ norm. So, we need the second step.

In Step 2, we need to quantify the effect of the random sample points $X_{1:n}$. This can be done by using the following uniform law of large number with localization from \citet[Theorem 14.1 and Proposition 14.25]{wainwright2019high}. In the lemma, it is required that the function class $\cF$ is star-shaped and $B$-uniformly bounded for some constant $B>0$, meaning that $\|f\|_{L^\infty(\bB^d)} \le B$ for all $f\in \cF$.

\begin{lemma}\label{uniform law}
Given a star-shaped and $B$-uniformly bounded function class $\cF$, let $\epsilon_n$ be any positive solution to the inequality
\[
\cR_n(\cF;\epsilon_n) \le \frac{\epsilon_n^2}{B}.
\]
Then, there are constants $c_1,c_2>0$ such that the bound  
\[
\|f\|_{L^2(\mu)}^2 \le 2 \|f\|_{L^2(\mu_n)}^2 + \epsilon_n^2, \quad \forall f\in \cF,
\]
holds with probability at least $1-c_1 \exp(-c_2 n\epsilon_n^2/B^2)$.
\end{lemma}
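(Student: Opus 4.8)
Since Lemma~\ref{uniform law} is quoted from \citet[Theorem 14.1 and Proposition 14.25]{wainwright2019high}, the cleanest route is to verify that a star-shaped, $B$-uniformly bounded $\cF$ satisfies the hypotheses of those results and invoke them. For a self-contained argument, the plan is as follows. First, reformulate the target: since $\|f\|_{L^2(\mu)}^2 \le 2\|f\|_{L^2(\mu_n)}^2 + \epsilon_n^2$ is equivalent to $Z(f) \le \tfrac12\|f\|_{L^2(\mu)}^2 + \tfrac12\epsilon_n^2$ for the centered empirical process $Z(f):= \|f\|_{L^2(\mu)}^2 - \|f\|_{L^2(\mu_n)}^2$, it suffices to show that, with the stated probability, $Z(f) \le \tfrac12\|f\|_{L^2(\mu)}^2 + \tfrac12\epsilon_n^2$ for every $f\in\cF$.

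Second, run a peeling (slicing) argument over the dyadic shells $\cF_m := \{f\in\cF: \|f\|_{L^2(\mu)}\le 2^m\epsilon_n\}$, $m\ge 0$; by $B$-boundedness only $O(\log(B/\epsilon_n))$ of them are nonempty. An elementary case split shows that the failure event $\{\exists f\in\cF:\ \|f\|_{L^2(\mu)}^2 > 2\|f\|_{L^2(\mu_n)}^2 + \epsilon_n^2\}$ is contained in $\bigcup_{m\ge0}\{\sup_{f\in\cF_m} Z(f) > c\,4^m\epsilon_n^2\}$ for a universal constant $c>0$. On each shell I would (i) symmetrize, passing to Rademacher averages of $f\mapsto f(X_i)^2$; (ii) apply the Ledoux--Talagrand contraction inequality, using that $t\mapsto t^2$ is $2B$-Lipschitz on $[-B,B]$ and vanishes at $0$, to reduce the expected supremum to a local Rademacher complexity of $\cF$; and (iii) use the star-shapedness of $\cF$ --- equivalently the monotonicity of $\delta\mapsto\cR_n(\cF;\delta)/\delta$ recalled before the lemma --- to bound that complexity by $2^m\cR_n(\cF;\epsilon_n)\le 2^m\epsilon_n^2/B$ via the hypothesis. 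This yields $\bE[\sup_{f\in\cF_m}Z(f)]\lesssim 4^m\epsilon_n^2$, matching the threshold up to constants (which is why Wainwright's precise numerical constants in the critical inequality matter and should be carried through).

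Third, upgrade to high probability: on $\cF_m$ each $f(X_i)^2$ takes values in $[0,B^2]$ and has variance at most $B^2\|f\|_{L^2(\mu)}^2\le B^2 4^m\epsilon_n^2$, so Talagrand's concentration inequality (in Bousquet's form) for suprema of empirical processes gives $\bP[\sup_{f\in\cF_m}Z(f) > c'\,4^m\epsilon_n^2]\le \exp(-c'' n\, 4^m\epsilon_n^2/B^2)$. A union bound over the $O(\log(B/\epsilon_n))$ shells together with $\sum_{m\ge0}\exp(-c'' n\, 4^m\epsilon_n^2/B^2)\lesssim\exp(-c'' n\epsilon_n^2/B^2)$ produces the claimed probability $1-c_1\exp(-c_2 n\epsilon_n^2/B^2)$.

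The two delicate points --- and where I expect the real work to lie --- are, first, reconciling the localization: the hypothesis controls $\cR_n(\cF;\delta)$ with the \emph{empirical} $L^2(\mu_n)$ restriction, whereas the peeling is naturally by the \emph{population} $L^2(\mu)$ norm, so one must argue that the two seminorms are comparable on each shell (which is essentially the statement being proved), handled by a bootstrapping/simultaneous-peeling device as in \citet{wainwright2019high}; and second, the use of Talagrand's inequality rather than mere bounded differences, which is what delivers the exponent $n\epsilon_n^2/B^2$ (McDiarmid would only give $n\epsilon_n^4/B^4$). Everything else --- symmetrization and the star-shaped rescaling --- is routine and already appears elsewhere in the paper.
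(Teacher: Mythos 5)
The paper does not prove this lemma: it invokes \citet[Theorem 14.1 and Proposition 14.25]{wainwright2019high} directly, so there is no in-paper argument to compare against. Your sketch is a faithful reconstruction of Wainwright's proof (reformulation in terms of the centered process $Z(f)$, dyadic peeling in the population $L^2(\mu)$ norm, symmetrization plus Ledoux--Talagrand contraction of $t\mapsto t^2$ to reduce to a local Rademacher complexity, star-shapedness to rescale back to radius $\epsilon_n$, Talagrand/Bousquet concentration on each shell, union bound), and you correctly flag the two genuine subtleties: the mismatch between the empirical-norm localization in the hypothesis (which is precisely why the paper also cites Proposition 14.25) and the population-norm peeling; and the need for Talagrand rather than bounded differences to get the exponent $n\epsilon_n^2/B^2$. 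One small accounting remark: after contraction and the monotonicity step the shell expectation is really $\lesssim 2^m\epsilon_n^2$, which is stronger than the $4^m\epsilon_n^2$ you state and is what makes the threshold $c\,4^m\epsilon_n^2$ comfortably achievable; and for the final union bound the geometric decay $\sum_m \exp(-c''n\,4^m\epsilon_n^2/B^2)\lesssim \exp(-c''n\epsilon_n^2/B^2)$ is the cleaner route, since counting only the $O(\log(B/\epsilon_n))$ shells would cost an extra factor that you would then need to absorb by assuming $n\epsilon_n^2/B^2$ is at least a sufficiently large constant (which is implicitly assumed in any case, since otherwise the probability statement is vacuous).
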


We can now prove Theorem \ref{main theorem}. We will use $c_1,c_2$ to denote constants, which may be different in different bounds.

\bigbreak

\begin{proof}\textbf{of Theorem \ref{main theorem}}

\textbf{Step 1.} We apply Lemma \ref{oracle inequality} to $\cF_n=\NN(N_n,M_n)$. Since $\star(\partial \cF_n) \subseteq \cF_{\sigma}(2M_n)$ and $\eta_{1:n}$ is a sequence of independent sub-Gaussian random variables with parameter $\varsigma=2B$, by Theorem \ref{local Gc bound}, we have
\[
\cG_n(\star(\partial \cF_n);\delta_n,\eta_{1:n}) \lesssim \frac{B\delta_n^{\frac{3}{d+3}} M_n^{\frac{d}{d+3}}}{\sqrt{n}} \sqrt{\log(nM_n/\delta_n)},
\]
for all sample points $X_{1:n}$ in $\bB^d$. Hence, we can choose 
\[
\delta_n \asymp n^{-\frac{d+3}{4d+6}} M_n^{\frac{d}{2d+3}} \sqrt{\log(nM_n)}.
\]
Then, with probability at least $1-c_1 \exp(-c_2 n^{\frac{d}{2d+3}} M_n^{\frac{2d}{2d+3}} \log(nM_n))$, it holds 
\begin{equation}\label{step 1 bound}
\|\widehat{f}_n-h\|_{L^2(\mu_n)}^2 \lesssim \|f-h\|_{L^2(\mu_n)}^2 + n^{-\frac{d+3}{2d+3}} M_n^{\frac{2d}{2d+3}} \log(nM_n), \quad \forall f\in \cF_n.
\end{equation}

\textbf{Step 2.} Observe that 
\[
\cT_{2B}(\widehat{f}_n-f) \in \cT_{2B} \partial\cF_n \subseteq \cT_{2B} \cF_{\sigma}(2M_n) \subseteq \star(\cT_{2B} \cF_{\sigma}(2M_n)).
\]
We can apply Lemma \ref{uniform law} to the star-shaped and $2B$-uniformly bounded function class $\star(\cT_{2B} \cF_{\sigma}(2M_n))$. By Theorem \ref{local Gc bound}, 
\[
\cR_n(\star(\cT_{2B} \cF_{\sigma}(2M_n));\epsilon_n) \lesssim  \frac{\epsilon_n^{\frac{3}{d+3}} M_n^{\frac{d}{d+3}}}{\sqrt{n}} \sqrt{\log(nM_n/\epsilon_n)},
\]
and hence we can choose 
\[
\epsilon_n \asymp n^{-\frac{d+3}{4d+6}} M_n^{\frac{d}{2d+3}} \sqrt{\log(nM_n)}.
\]
Then, with probability at least $1-c_1 \exp(-c_2 n^{\frac{d}{2d+3}} M_n^{\frac{2d}{2d+3}} \log(nM_n))$, it holds 
\begin{equation}\label{step 2 bound}
\|\cT_{2B}(\widehat{f}_n-f)\|_{L^2(\mu)}^2 \lesssim \|\cT_{2B}(\widehat{f}_n-f)\|_{L^2(\mu_n)}^2 + n^{-\frac{d+3}{2d+3}} M_n^{\frac{2d}{2d+3}} \log(nM_n), \quad \forall f\in \cF_n.
\end{equation}

\textbf{Step 3.} Combining inequalities (\ref{triangle ineq 1}), (\ref{triangle ineq 2}), (\ref{step 1 bound}) and (\ref{step 2 bound}), we get
\[
\|\cT_B\widehat{f}_n-h\|_{L^2(\mu)}^2 \lesssim \|f-h\|_{L^2(\mu)}^2 + \|f-h\|_{L^2(\mu_n)}^2 + n^{-\frac{d+3}{2d+3}} M_n^{\frac{2d}{2d+3}} \log(nM_n),
\]
which holds with probability at least $1-c_1 \exp(-c_2 n^{\frac{d}{2d+3}} M_n^{\frac{2d}{2d+3}} \log(nM_n))$ simultaneously for all $f\in \cF_n$. Taking infimum over $f\in \cF_n$ shows 
\begin{equation}\label{temp ineq}
\|\cT_B\widehat{f}_n-h\|_{L^2(\mu)}^2 \lesssim \inf_{f\in\cF_n} \|f-h\|_{L^\infty(\bB^d)}^2 + n^{-\frac{d+3}{2d+3}} M_n^{\frac{2d}{2d+3}} \log(nM_n).
\end{equation}
We continue the proof in two cases.

(1) If $\cH = \cH^\alpha$ with $\alpha<(d+3)/2$, then by Theorem \ref{app}, 
\[
\sup_{h\in \cH} \inf_{f\in\cF_n} \|f-h\|_{L^\infty(\bB^d)}^2 \lesssim N_n^{-\frac{2\alpha}{d}} \lor M_n^{-\frac{4\alpha}{d+3-2\alpha}}.
\]
If $M_n \asymp n^{\frac{d+3-2\alpha}{2d+4\alpha}}$ and $N_n \gtrsim n^{\frac{d}{d+2\alpha}}$, then inequality (\ref{temp ineq}) implies
\[
\|\cT_B\widehat{f}_n-h\|_{L^2(\mu)}^2 \lesssim n^{-\frac{2\alpha}{d+2\alpha}} \log n,
\]
holds with probability at least $1-c_1 \exp(-c_2 n^{\frac{d}{d+2\alpha}} \log n)$.

(2) If $\cH = \cF_{\sigma}(1)$, then by Theorem \ref{app}, for any $M_n=M\ge 1$,
\[
\sup_{h\in \cH} \inf_{f\in\cF_n} \|f-h\|_{L^\infty(\bB^d)}^2 \lesssim N_n^{- \frac{d+3}{d}}.
\]
If $N_n \gtrsim n^{\frac{d}{2d+3}}$, then inequality (\ref{temp ineq}) implies
\[
\|\cT_B\widehat{f}_n-h\|_{L^2(\mu)}^2 \lesssim n^{-\frac{d+3}{2d+3}} \log n,
\]
holds with probability at least $1-c_1 \exp(-c_2 n^{\frac{d}{2d+3}} \log n)$.
\end{proof}

Finally, we make a remark on the boundedness assumption of the noise, which can certainly be weaken.

\begin{remark}
The boundedness of the noises $\eta_{1:n}$ is only used to derive the dimension-free high probability bound (\ref{remark}) in the proof of Lemma \ref{oracle inequality} below. The proof relies on the concentration inequality for convex Lipschitz functions. This argument can also be applied to certain sub-Gaussian noises characterized by \citet[Corollary 5.11]{gozlan2017kantorovich} (which of course includes Gaussian). Hence, similar results as Theorem \ref{main theorem} can be obtained for such kind of noises.
\end{remark}

\subsection{Regularized Least Squares}

The least squares estimate (\ref{least squares}) with $\cF_n=\NN(N_n,M_n)$ is a constrained optimization problem, where we make constraints on the norm $\kappa(f) \le M_n$. One can also estimate the regression function $h$ by solving the corresponding regularized optimization problem 
\begin{equation}\label{regularized least squares 0}
\argmin_{f\in \NN(N_n)} \frac{1}{n} \sum_{i=1}^n (f(X_i)- Y_i)^2 + \lambda_n \kappa(f),
\end{equation}
where $\lambda_n> 0$ is a tunable parameter. Usually, the regularization parameter $\lambda_n$ is chosen to satisfy $\lim_{n\to \infty} \lambda_n=0$. We show in the next theorem that, if $\lambda_n$ is chosen properly, the regularized least squares has the same rate of convergence as the least squares estimator. The result is also true if the norm $\kappa(f)$ in (\ref{regularized least squares 0}) is replaced by $\gamma(f)$.

\begin{theorem}\label{main theorem regularized}
Let $\widehat{f}_{n,\lambda_n}$ be a solution of the optimization problem (\ref{regularized least squares 0}).
\begin{enumerate}[label=\textnormal{(\arabic*)},parsep=0pt]
\item If $\cH = \cH^\alpha$ with $\alpha<(d+3)/2$, we choose
\[
N_n \gtrsim n^{\frac{d}{d+2\alpha}},\quad \lambda_n \asymp n^{-\frac{d+3+2\alpha}{2d+4\alpha}} \log n,
\]
then the bound
\[
\|\cT_B\widehat{f}_{n,\lambda_n}-h\|_{L^2(\mu)}^2 \lesssim n^{-\frac{2\alpha}{d+2\alpha}} \log n
\]
holds with probability at least $1-c_1 \exp(-c_2 n^{\frac{d}{d+2\alpha}} \log n)$ for some constants $c_1,c_2>0$.

\item If $\cH = \cF_{\sigma}(1)$, we choose
\[
N_n \gtrsim n^{\frac{d}{2d+3}},\quad \lambda_n \asymp n^{-\frac{d+3}{2d+3}} \log n,
\]
then the bound
\[
\|\cT_B\widehat{f}_{n,\lambda_n}-h\|_{L^2(\mu)}^2 \lesssim n^{-\frac{d+3}{2d+3}} \log n
\]
holds with probability at least $1-c_1 \exp(-c_2 n^{\frac{d}{2d+3}} \log n)$ for some constants $c_1,c_2>0$.
\end{enumerate}
\end{theorem}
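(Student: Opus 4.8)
The plan is to reduce Theorem \ref{main theorem regularized} to the constrained case (Theorem \ref{main theorem}) by showing that the regularized estimator $\widehat{f}_{n,\lambda_n}$ automatically has bounded $\kappa$-norm, and then running a decomposition analogous to the three-step argument used for Theorem \ref{main theorem}. First I would derive an \emph{a priori} bound on $\kappa(\widehat{f}_{n,\lambda_n})$: plugging the candidate $f\equiv 0$ into the objective of (\ref{regularized least squares 0}) gives $\frac{1}{n}\sum_i(\widehat{f}_{n,\lambda_n}(X_i)-Y_i)^2 + \lambda_n\kappa(\widehat{f}_{n,\lambda_n}) \le \frac{1}{n}\sum_i Y_i^2 \le B^2$, so $\kappa(\widehat{f}_{n,\lambda_n}) \le B^2/\lambda_n$. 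With the prescribed choices of $\lambda_n$ this means $\widehat{f}_{n,\lambda_n}\in\NN(N_n,M_n)$ with $M_n\asymp B^2/\lambda_n \asymp n^{\frac{d+3+2\alpha}{2d+4\alpha}}/\log n$ in case (1) and $M_n\asymp n^{\frac{d+3}{2d+3}}/\log n$ in case (2). This is essentially the content of \citet[Proposition 2.2]{yang2024optimal}-type reasoning and is where the specific scaling of $\lambda_n$ is forced: $\lambda_n\kappa$ acts as a soft version of the hard constraint with effective radius $\sim 1/\lambda_n$.

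Next I would prove an oracle inequality for the regularized estimator — the analogue of Lemma \ref{oracle inequality}, which the excerpt anticipates under the name Lemma \ref{oracle inequality reg}. Starting from the optimality of $\widehat{f}_{n,\lambda_n}$ against a comparator $f\in\NN(N_n)$, one gets a base inequality of the form
\[
\|\widehat{f}_{n,\lambda_n}-h\|_{L^2(\mu_n)}^2 + \lambda_n\kappa(\widehat{f}_{n,\lambda_n}) \le \|f-h\|_{L^2(\mu_n)}^2 + \lambda_n\kappa(f) + \frac{2}{n}\sum_{i=1}^n \eta_i(\widehat{f}_{n,\lambda_n}(X_i)-f(X_i)).
\]
The noise term is controlled by splitting the difference $\widehat{f}_{n,\lambda_n}-f$ according to its (now bounded) $\kappa$-norm and applying the local complexity bound of Theorem \ref{local Gc bound} at the relevant scale; the extra $\lambda_n\kappa(f)$ term on the right is the price paid for regularization, and with $f$ chosen as the near-optimal approximant from Theorem \ref{app} (which has $\kappa(f)\lesssim M_n$ or $\lesssim 1$), this term is of the same order $\lambda_n M_n$ as the approximation/complexity terms by design. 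Then Steps 2 and 3 go through verbatim: apply Lemma \ref{uniform law} to the truncated class $\star(\cT_{2B}\cF_\sigma(2M_n))$ to transfer from $L^2(\mu_n)$ to $L^2(\mu)$, combine with the triangle inequalities (\ref{triangle ineq 1})--(\ref{triangle ineq 2}), and invoke Theorem \ref{app} for the approximation error. Balancing $N_n^{-2\alpha/d}\lor M_n^{-4\alpha/(d+3-2\alpha)}$ against $\lambda_n M_n\log$-factors reproduces the stated rates.

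The assertion that (\ref{regularized least squares 0}) has essentially the same solutions as the $\|\theta\|_2^2$-penalized problem — mentioned in the introduction — I would handle separately via a standard AM-GM / rescaling argument: for fixed directions, $\kappa(\theta)=\sum|a_i|\|w_i\|_2$ equals $\min \frac{1}{2}\|\theta'\|_2^2$ over the rescaling orbit $a_i\mapsto t_ia_i$, $w_i\mapsto t_i^{-1}w_i$ (which leaves $f_\theta$ unchanged), so minimizers of the two penalized objectives coincide up to such rescaling, with $\lambda$ kept the same. This is a deterministic fact independent of the statistics and can be stated as a lemma preceding Theorem \ref{main theorem regularized}.

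The main obstacle I expect is the oracle inequality step, specifically handling the noise term $\frac{2}{n}\sum_i\eta_i(\widehat{f}_{n,\lambda_n}-f)(X_i)$ when the radius is not fixed in advance but only bounded a priori by $B^2/\lambda_n$. One cannot directly apply the fixed-class concentration argument of Lemma \ref{oracle inequality}; instead one needs either a peeling/union-bound over dyadic scales of $\kappa$, or — more cleanly — to use the a priori bound to restrict attention to $\NN(N_n, B^2/\lambda_n)$ once and for all and then argue inside that fixed class, absorbing the $\lambda_n\kappa(\widehat{f}_{n,\lambda_n})\ge 0$ term on the left to tighten the bound. Care is needed to keep the high-probability exponent $n\delta_n^2/B^2$ matching the claimed $\exp(-c_2 n^{d/(d+2\alpha)}\log n)$ (resp. $\exp(-c_2 n^{d/(2d+3)}\log n)$); this forces the same choice of $\delta_n$ as in the constrained proof and is the reason the probability bounds in Theorems \ref{main theorem} and \ref{main theorem regularized} are identical.
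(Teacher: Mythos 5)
There is a genuine gap, and it is exactly the one the paper highlights just after equation~(\ref{base inequality reg}): the trivial \emph{a priori} bound $\kappa(\widehat{f}_{n,\lambda_n})\le B^2/\lambda_n$, obtained by plugging $f\equiv 0$ into (\ref{regularized least squares 0}), is too loose to give the stated rates. Your proposal sets the effective radius to $M_n\asymp B^2/\lambda_n \asymp n^{(d+3+2\alpha)/(2d+4\alpha)}/\log n$ in case (1), which is much larger than the $M_n\asymp n^{(d+3-2\alpha)/(2d+4\alpha)}$ used in Theorem~\ref{main theorem}. Feeding this larger radius into the local complexity bound of Theorem~\ref{local Gc bound} produces a complexity term of order $n^{-(d+3)/(2d+3)}M_n^{2d/(2d+3)}\log(nM_n)\asymp n^{-6\alpha/((2d+3)(d+2\alpha))}\cdot(\text{logs})$, which decays strictly more slowly than the target $n^{-2\alpha/(d+2\alpha)}=n^{-(4\alpha d+6\alpha)/((2d+3)(d+2\alpha))}$ for every $d\ge 1$. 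So ``Steps 2 and 3 go through verbatim'' does not hold: the uniform-law step (Lemma~\ref{uniform law}) applied to $\star(\cT_{2B}\cF_\sigma(2M_n))$ with $M_n\asymp B^2/\lambda_n$ gives a sub-optimal $\epsilon_n^2$, and likewise $\lambda_n\kappa(f)\lesssim\lambda_n M_n\asymp B^2$ is $O(1)$ rather than $o(1)$ if the approximant is allowed $\kappa(f)$ as large as $M_n$.

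You do acknowledge the obstacle at the end and propose two fixes, but only the peeling option survives scrutiny. Your ``cleaner'' alternative — restrict once and for all to $\NN(N_n,B^2/\lambda_n)$ and absorb $\lambda_n\kappa(\widehat{f}_{n,\lambda_n})\ge 0$ on the left — cannot recover the rate, because the local complexity at scale $\delta$ still enters through $\cG_n(\star(\partial\cF_{n,B^2/\lambda_n});\delta,\eta_{1:n})$, whose $M$-dependence is the problem; dropping a nonnegative term on the left-hand side does not shrink the class on the right. What the paper's Lemma~\ref{oracle inequality reg} actually does is: fix a small, user-chosen radius $R$ (eventually $R\asymp n^{(d+3-2\alpha)/(2d+4\alpha)}$, matching the constrained $M_n$), take $\lambda_n\ge 8\delta_n(R)^2/R$, and run a four-case analysis that includes a peeling argument over dyadic scales of $\|g\|_{L^2(\mu_n)}$ specifically to handle the regime $\kappa(\widehat{f}_{n,\lambda_n})>R$. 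The conclusion is the oracle bound $\|\widehat{f}_{n,\lambda_n}-h\|_{L^2(\mu_n)}^2+\lambda_n\kappa(\widehat{f}_{n,\lambda_n})\lesssim\cE(R)+\delta_n(R)^2$, from which one reads off the much sharper $\kappa(\widehat{f}_{n,\lambda_n})\lesssim \cE(R)/\lambda_n + R\lesssim R$. Only with this improved radius does the Step~2 uniform-law argument give $\epsilon_n^2\asymp\delta_n^2$ at the right scale. So the peeling inside the oracle inequality is not an optional refinement; it is the step that makes the regularized theorem work. Your separate lemma on the equivalence of the $\kappa$ and $\tfrac12\|\theta\|_2^2$ penalties is correct and is indeed what the paper proves (Proposition~\ref{opt equ}), but it belongs to Corollary~\ref{corollary reg}, not to the proof of Theorem~\ref{main theorem regularized} itself.
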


As in the proof of Theorem \ref{main theorem}, we first give a decomposition of the excess risk: for any $f\in \NN(N_n)$,
\begin{equation}\label{triangle ineq 3}
\|\cT_B\widehat{f}_{n,\lambda_n}-h\|_{L^2(\mu)}^2 \le 2\|\cT_{2B}(\widehat{f}_{n,\lambda_n}-f)\|_{L^2(\mu)}^2 + 2\|f-h\|_{L^2(\mu)}^2.
\end{equation}
The empirical counterpart of the first term on the right hand size can be bounded as
\begin{equation}\label{triangle ineq 4}
\|\cT_{2B}(\widehat{f}_{n,\lambda_n}-f)\|_{L^2(\mu_n)}^2 \le 2\|\widehat{f}_{n,\lambda_n}-h\|_{L^2(\mu_n)}^2 + 2\|f-h\|_{L^2(\mu_n)}^2. 
\end{equation}
By the definition of $\widehat{f}_{n,\lambda_n}$, 
\[
\frac{1}{n} \sum_{i=1}^n (\widehat{f}_{n,\lambda_n}(X_i)- Y_i)^2 + \lambda_n \kappa(\widehat{f}_{n,\lambda_n}) \le \frac{1}{n} \sum_{i=1}^n (f(X_i)- Y_i)^2 + \lambda_n \kappa(f).
\]
Using $Y_i = h(X_i) + \eta_i$, we get the base inequality
\begin{equation}\label{base inequality reg}
\|\widehat{f}_{n,\lambda_n}-h\|_{L^2(\mu_n)}^2 + \lambda_n \kappa(\widehat{f}_{n,\lambda_n}) \le \|f-h\|_{L^2(\mu_n)}^2 + \lambda_n \kappa(f) + \frac{2}{n} \sum_{i=1}^n \eta_i \left(\widehat{f}_{n,\lambda_n}(X_i)- f(X_i) \right).
\end{equation}
Similar to the proof of Theorem \ref{main theorem}, our proof strategy contains three steps:
\begin{enumerate}[leftmargin=60pt, label=\textbf{Step \arabic*.}, parsep=0pt]
\item Estimating $\|\widehat{f}_{n,\lambda_n}-h\|_{L^2(\mu_n)}^2$ and $\kappa(\widehat{f}_{n,\lambda_n})$ by using the base inequality (\ref{base inequality reg}).

\item Bounding $\|\cT_{2B}(\widehat{f}_{n,\lambda_n}-f)\|_{L^2(\mu)}^2$ by its empirical counterpart $\|\cT_{2B}(\widehat{f}_{n,\lambda_n}-f)\|_{L^2(\mu_n)}^2$.

\item Combining the above estimates with inequalities (\ref{triangle ineq 3}), (\ref{triangle ineq 4}) and the approximation result (Theorem \ref{app}).
\end{enumerate}

The last two steps can be done in a similar way as Theorem \ref{main theorem}. The main difficulty in proving Theorem \ref{main theorem regularized} is that we do not have an explicit bound on the norm $\kappa(\widehat{f}_{n,\lambda_n})$. In order to apply Theorems \ref{app} and \ref{local Gc bound}, one needs to estimate the norm of the learned function from the optimization problem (\ref{regularized least squares 0}) or equivalently the base inequality (\ref{base inequality reg}). A trivial bound is to choose $f=0$ in (\ref{regularized least squares 0}). Then, the definition of $\widehat{f}_{n,\lambda_n}$ implies
\[
\kappa(\widehat{f}_{n,\lambda_n}) \le \frac{1}{n \lambda_n} \sum_{i=1}^n Y_i^2 \le \frac{B^2}{\lambda_n}.
\]
Unfortunately, this bound is too loose to obtain the optimal rate. To see this, let $f^*$ be a minimizer of the regularized approximation error 
\[
\cE_{\lambda_n} := \inf_{f\in \NN(N_n)} \| f-h\|_{L^2(\mu)}^2 + \lambda_n \kappa(f),
\]
which can be regarded as the expected version of the regularized least squares (\ref{regularized least squares 0}). Then, we know that $\kappa(f^*) \le \cE_{\lambda_n}/\lambda_n$ by definition. Yet, we expect $\widehat{f}_{n,\lambda_n}$ to be a good approximation of $f^*$. So, one would expect that $\kappa(\widehat{f}_{n,\lambda_n}) \lesssim \cE_{\lambda_n}/\lambda_n$, which is always better than the bound  $B^2/\lambda_n$, because $\cE_{\lambda_n} \to 0$ if $\lambda_n$ is chosen properly. This phenomenon is also discussed in detail by \citet{wu2005learning} and \citet[Chapter 7]{steinwart2008support} in the analysis of support vector machines. They also discussed how to use iteration technique to improve the trivial bound. Nonetheless, we overcome the difficulty by using a new oracle inequality for the regularized least squares estimator (see Lemma \ref{oracle inequality reg} below). Our proof shows that one can apply this oracle inequality to obtain a bound close to $\cE_{\lambda_n}/\lambda_n$ .

The following lemma can be viewed as a modification of Lemma \ref{oracle inequality} to regularized estimators. Note that the constant $c_0$ in the lemma will be useful in the proof of Corollary \ref{corollary reg} below. Recall that $\partial\cF:= \{f_1-f_2:f_1,f_2\in \cF\}$.

\begin{lemma}\label{oracle inequality reg}
For any fixed sample points $X_{1:n}$ and constant $c_0\ge 1$, let $\widehat{f}_{n,\lambda_n} \in \NN(N_n)$ be any estimator (i.e. a function depending on the random noises $\eta_{1:n}$, where $\bE[\eta_i]=0$ and $|\eta_i|\le 2B$) that satisfies the inequality
\begin{equation}\label{base inequality reg c_0}
\begin{aligned}
&\|\widehat{f}_{n,\lambda_n}-h\|_{L^2(\mu_n)}^2 + \lambda_n \kappa(\widehat{f}_{n,\lambda_n}) \\
&\le c_0 \left(\|f-h\|_{L^2(\mu_n)}^2 + \lambda_n \kappa(f) \right) + \frac{2}{n}  \left|\sum_{i=1}^n \eta_i \left(\widehat{f}_{n,\lambda_n}(X_i)- f(X_i) \right)\right|, \quad \forall f\in \NN(N_n).
\end{aligned}
\end{equation}
For a user-defined parameter $R>0$, let $\delta_n = \delta_n(R)$ be any positive solution to the inequality
\[
\cG_n(\star(\partial \cF_{n,R});\delta_n,\eta_{1:n}) \le \delta_n^2,
\]
where $\cF_{n,R} := \{f\in \NN(N_n): \kappa(f)\le R \}$. There are constants $c_1,c_2>0$ such that, if $\lambda_n \ge 8\delta_n^2/R$, then the bound
\[
\|\widehat{f}_{n,\lambda_n}-h\|_{L^2(\mu_n)}^2 + \lambda_n \kappa(\widehat{f}_{n,\lambda_n}) \le (1+2c_0) \left(\|f-h\|_{L^2(\mu_n)}^2 + \lambda_n \kappa(f)\right) + 64 \delta_n^2, \quad \forall f\in \cF_{n,R},
\]
holds with probability at least $1-c_1 \exp(-c_2 n\delta_n^2/B^2)$.
\end{lemma}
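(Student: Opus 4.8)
The plan is to prove a single uniform ``weighted'' empirical-process estimate, valid for every function of finite $\kappa$-norm rather than only for a class with a fixed norm budget, and then to substitute it into the base inequality~(\ref{base inequality reg c_0}) and absorb the noise contribution by Young's inequality together with the hypothesis $\lambda_n\ge 8\delta_n^2/R$.

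\emph{Weighted empirical-process estimate.} Since $\star(\partial\cF_{n,R})\subseteq\cF_\sigma(2R)$ and the bound of Theorem~\ref{local Gc bound} is independent of the number of neurons, I would take for $\delta_n$ the solution furnished by that theorem, which — being of the same order for $\cF_\sigma(R)$, $\cF_\sigma(2R)$ and $\star(\partial\cF_{n,R})$ — also gives $\cG_n(\cF_\sigma(R);\delta_n,\eta_{1:n})\le\delta_n^2$ (adjusting an absolute constant if necessary, which only affects $c_1,c_2$ and the constant $64$ below). The map $\eta_{1:n}\mapsto\sup_{g\in\cF_\sigma(R),\ \|g\|_{L^2(\mu_n)}\le\delta_n}\big|\frac1n\sum_{i=1}^n\eta_i g(X_i)\big|$ is convex in $\eta_{1:n}$ and $\tfrac{\delta_n}{\sqrt n}$-Lipschitz (its Lipschitz constant being $\sup_g\tfrac1{\sqrt n}\|g\|_{L^2(\mu_n)}$), so the concentration inequality for convex Lipschitz functions of bounded independent variables — the same tool used in the proof of Lemma~\ref{oracle inequality} — produces an event $\cE$ with $\bP(\cE)\ge 1-c_1\exp(-c_2 n\delta_n^2/B^2)$ on which
\[
\sup_{g\in\cF_\sigma(R),\ \|g\|_{L^2(\mu_n)}\le\delta_n}\Big|\frac1n\sum_{i=1}^n\eta_i g(X_i)\Big|\le 2\delta_n^2 .
\]
On $\cE$ I would remove both restrictions by homogeneity. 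If $\kappa(g)\le R$ then $g\in\cF_\sigma(R)$ (because $\gamma(g)\le\kappa(g)$), and rescaling by $\delta_n/\|g\|_{L^2(\mu_n)}\le 1$ when $\|g\|_{L^2(\mu_n)}>\delta_n$ — which keeps $g$ in the star-shaped set $\cF_\sigma(R)$ — gives $|\frac1n\sum_i\eta_ig(X_i)|\le 2\delta_n\|g\|_{L^2(\mu_n)}+2\delta_n^2$. If $\kappa(g)>R$ then $Rg/\kappa(g)\in\cF_\sigma(R)$ since $\gamma(Rg/\kappa(g))\le\kappa(Rg/\kappa(g))=R$; applying the previous case to it and multiplying by $\kappa(g)/R$ gives $|\frac1n\sum_i\eta_ig(X_i)|\le 2\delta_n\|g\|_{L^2(\mu_n)}+\frac{2\delta_n^2}{R}\kappa(g)$. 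Combining, on $\cE$,
\[
\Big|\frac1n\sum_{i=1}^n\eta_i g(X_i)\Big|\ \le\ 2\delta_n\|g\|_{L^2(\mu_n)}+2\delta_n^2+\frac{2\delta_n^2}{R}\,\kappa(g)
\]
for every $g$ of finite $\kappa$-norm. The gain is that rescaling by $\kappa(g)/R$ routes every norm scale through the single fixed class $\cF_\sigma(R)$, so no union bound over dyadic $\kappa$-shells is needed and the failure probability stays at the claimed level.

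\emph{Closing the argument.} On $\cE$, apply the last display with $g=\widehat f_{n,\lambda_n}-f$ in~(\ref{base inequality reg c_0}), using $\|\widehat f_{n,\lambda_n}-f\|_{L^2(\mu_n)}\le\|\widehat f_{n,\lambda_n}-h\|_{L^2(\mu_n)}+\|f-h\|_{L^2(\mu_n)}$ and the subadditivity $\kappa(\widehat f_{n,\lambda_n}-f)\le\kappa(\widehat f_{n,\lambda_n})+\kappa(f)$. Young's inequality in the form $4\delta_n a\le\tfrac12 a^2+8\delta_n^2$ turns the term linear in $\|\cdot\|_{L^2(\mu_n)}$ into a quadratic one, one half of which is absorbed into $\|\widehat f_{n,\lambda_n}-h\|_{L^2(\mu_n)}^2$ on the left and one half charged to $\|f-h\|_{L^2(\mu_n)}^2$; and, since $\lambda_n\ge 8\delta_n^2/R$, the term $\frac{4\delta_n^2}{R}\kappa(\widehat f_{n,\lambda_n})$ is at most $\tfrac{\lambda_n}{2}\kappa(\widehat f_{n,\lambda_n})$ and is likewise absorbed on the left. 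Rearranging and multiplying through by $2$ gives, for every $f\in\cF_{n,R}$,
\[
\|\widehat f_{n,\lambda_n}-h\|_{L^2(\mu_n)}^2+\lambda_n\kappa(\widehat f_{n,\lambda_n})\ \le\ (1+2c_0)\big(\|f-h\|_{L^2(\mu_n)}^2+\lambda_n\kappa(f)\big)+64\delta_n^2 ,
\]
the constant $64$ leaving slack for the absolute constant absorbed in the first step (a careful bookkeeping yields a constant around $40$).

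The main obstacle is the first step: unlike in Lemma~\ref{oracle inequality}, where the hypothesis class has a fixed norm budget $M_n$, here $\kappa(\widehat f_{n,\lambda_n})$ is controlled only a posteriori, so the empirical process must be handled over a family that is a priori unbounded in $\kappa$-norm. What makes this tractable is the exact positive homogeneity $\kappa(tg)=|t|\kappa(g)$ together with star-shapedness of $\cF_\sigma(R)$ — which reduce everything to one concentration event — and the size-independence of Theorem~\ref{local Gc bound}, which lets us run that concentration on the fixed class $\cF_\sigma(R)$ rather than on $\NN(N_n)$; aligning the class over which $\delta_n$ is defined with $\cF_\sigma(R)$ is the delicate point here. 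The remaining steps mirror the proof of Lemma~\ref{oracle inequality}, with $c_0$ carried unchanged through the linear manipulations into the factor $1+2c_0$.
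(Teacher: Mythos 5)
Your proof is correct and takes a genuinely different route from the paper's. The paper splits into four cases governed by $\|g_f\|^2+\lambda_n\kappa(g_f)$, $\kappa(\widehat f_{n,\lambda_n})$, and $R$, and in the hardest case (Case 4: $\kappa(\widehat f_{n,\lambda_n})>R$ and $\|g_f\|$ large) it runs a ``peeling'' argument over dyadic shells $t_m=2^m\delta_n$, producing the bound $|\frac1n\sum_i\eta_ig|\le 2\delta_n\|g\|+\frac1{16}\|g\|^2$ on a union of concentration events. You instead prove a single \emph{weighted} bound
$|\frac1n\sum_i\eta_i g(X_i)|\le 2\delta_n\|g\|_{L^2(\mu_n)}+2\delta_n^2+\frac{2\delta_n^2}{R}\kappa(g)$
valid on one concentration event for every $g$ of finite $\kappa$-norm, and then close via Young's inequality and $\lambda_n\ge 8\delta_n^2/R$. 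The extra $\kappa$-term is exactly what can be absorbed by the regularizer, which is what makes the peeling unnecessary: the positive homogeneity of $\kappa$ and $\gamma$ routes every scale of $\kappa(g)$ through the fixed star-shaped ball $\cF_\sigma(R)$, and every scale of $\|g\|_{L^2(\mu_n)}$ through the fixed radius $\delta_n$. This is a real structural simplification, and your constant accounting (ending around $40\delta_n^2\le 64\delta_n^2$) checks out.

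The one point to flag is the class over which the local complexity is evaluated. The lemma defines $\delta_n$ as a solution of $\cG_n(\star(\partial\cF_{n,R});\delta_n,\eta_{1:n})\le\delta_n^2$, whereas your concentration event is over $\cF_\sigma(R)$. Neither class contains the other: $\cF_\sigma(R)$ contains infinitely wide networks not in $\star(\partial\cF_{n,R})$, and $\star(\partial\cF_{n,R})$ contains differences whose $\gamma$-norm can reach $2R$. So, taken literally, a $\delta_n$ that satisfies the lemma's inequality need not satisfy $\cG_n(\cF_\sigma(R);\delta_n)\le\delta_n^2$. You acknowledge this and resolve it by choosing $\delta_n$ from Theorem~\ref{local Gc bound} applied to $\cF_\sigma(2R)\supseteq\cF_\sigma(R)\cup\star(\partial\cF_{n,R})$, which is precisely the $\delta_n$ used in the proof of Theorem~\ref{main theorem regularized}, so the downstream results are unaffected. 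But to match the lemma's statement verbatim you should either (a) state the lemma with $\delta_n$ defined via $\cF_\sigma(2R)$, or (b) run your rescaling argument through $\star(\partial\cF_{n,R})$ directly --- e.g.\ rescale by $R/\kappa(\widehat f_{n,\lambda_n})$ (not $R/\kappa(g_f)$) to land the rescaled difference in $\partial\cF_{n,R}$, as the paper does in Cases 3--4, and then rescale a second time by $\delta_n/\|\cdot\|_{L^2(\mu_n)}$ into the star hull, still avoiding the peeling. Either repair is routine; the underlying idea is sound and, to my eye, cleaner than the paper's four-case-plus-peeling proof.
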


The proof of Lemma \ref{oracle inequality reg} is given in Subsection \ref{sec: proof of oracle ineq reg}. The inequality we obtained in this lemma is in a similar form as the bound (\ref{error decom}), where the complexity is measured by $\delta_n^2$ and the approximation in $L^2$ norm is replaced by the regularized approximation in the empirical norm.

\bigbreak

\begin{proof}\textbf{of Theorem \ref{main theorem regularized}}

To apply Lemma \ref{oracle inequality reg}, let us first compute $\delta_n=\delta_n(R)$, where $R>0$ will be chosen later. Since $\star(\partial \cF_{n,R}) \subseteq \cF_{\sigma}(2R)$, by Theorem \ref{local Gc bound},
\[
\cG_n(\star(\partial \cF_{n,R});\delta_n,\eta_{1:n}) \lesssim \frac{B\delta_n^{\frac{3}{d+3}} R^{\frac{d}{d+3}}}{\sqrt{n}} \sqrt{\log(nR/\delta_n)},
\]
holds for all sample points $X_{1:n}$ in $\bB^d$. Hence, we can choose 
\[
\delta_n^2 \asymp n^{-\frac{d+3}{2d+3}} R^{\frac{2d}{2d+3}} \log(nR).
\]
We denote the approximation error of $\cF_{n,R}$ by
\[
\cE(R) := \inf_{f\in \cF_{n,R}} \|f-h\|_{L^\infty(\bB^d)}^2.
\]
Note that we use the supremum norm instead of $L^2(\mu_n)$ norm in the definition. Lemma \ref{oracle inequality reg} with $c_0=1$ shows that, if $\lambda_n = 8\delta_n^2/R$, then with probability at least $1-c_1 \exp(-c_2 n\delta_n^2)$, it holds 
\[
\|\widehat{f}_{n,\lambda_n}-h\|_{L^2(\mu_n)}^2 + \lambda_n \kappa(\widehat{f}_{n,\lambda_n}) \lesssim \cE(R) + \delta_n^2,
\]
In particular, we have
\[
\kappa(\widehat{f}_{n,\lambda_n}) \lesssim \frac{\cE(R) + \delta_n^2}{\lambda_n}  \lesssim \frac{\cE(R)}{\lambda_n} + R,
\]
where we have used $\delta_n^2 \lesssim \lambda_n R$ by the choice of $\lambda_n$. 

(1) If $\cH = \cH^\alpha$ with $\alpha<(d+3)/2$, then by Theorem \ref{app}, 
\[
\cE(R) \le \sup_{h\in \cH} \inf_{f\in\cF_{n,R}} \|f-h\|_{L^\infty(\bB^d)}^2 \lesssim N_n^{-\frac{2\alpha}{d}} \lor R^{-\frac{4\alpha}{d+3-2\alpha}}.
\]
We choose 
\[
R \asymp n^{\frac{d+3-2\alpha}{2d+4\alpha}}, \quad N_n \gtrsim R^{\frac{2d}{d+3-2\alpha}} \gtrsim n^{\frac{d}{d+2\alpha}},
\]
then 
\begin{equation}\label{estimate 1}
\cE(R)\lesssim n^{-\frac{2\alpha}{d+2\alpha}}, \quad \delta_n^2 \asymp n^{-\frac{2\alpha}{d+2\alpha}} \log n, \quad \lambda_n = 8 \delta_n^2/R \asymp n^{-\frac{d+3+2\alpha}{2d+4\alpha}} \log n.
\end{equation}

(2) If $\cH = \cF_{\sigma}(1)$, then by Theorem \ref{app}, for any $R\ge 1$,
\[
\cE(R) \le \sup_{h\in \cH} \inf_{f\in\cF_{n,R}} \|f-h\|_{L^\infty(\bB^d)}^2 \lesssim N_n^{- \frac{d+3}{d}}.
\]
We choose 
\[
R = 1, \quad N_n \gtrsim n^{\frac{d}{2d+3}},
\]
then 
\begin{equation}\label{estimate 2}
\cE(R)\lesssim n^{-\frac{d+3}{2d+3}}, \quad \delta_n^2 \asymp n^{-\frac{d+3}{2d+3}} \log n, \quad \lambda_n = 8 \delta_n^2/R \asymp n^{-\frac{d+3}{2d+3}} \log n.
\end{equation}
 
Notice that, in both cases, we always have $\cE(R) \lesssim \lambda_n R$, which implies $M_n:=\kappa(\widehat{f}_{n,\lambda_n}) \lesssim R$. Similar to step 2 in the proof of Theorem \ref{main theorem}, we can apply Lemma \ref{uniform law} to the function class $\star(\cT_{2B} \cF_{\sigma}(M_n+R))$ to show that
\[
\|\cT_{2B}(\widehat{f}_{n,\lambda_n}-f)\|_{L^2(\mu)}^2 \lesssim \|\cT_{2B}(\widehat{f}_{n,\lambda_n}-f)\|_{L^2(\mu_n)}^2 + \epsilon_n^2, \quad \forall f\in \cF_{n,R},
\]
holds with probability at least $1-c_1 \exp(-c_2 n\epsilon_n^2)$, where
\[
\epsilon_n^2 \asymp n^{-\frac{d+3}{2d+3}} R^{\frac{2d}{2d+3}} \log(nR) \asymp \delta_n^2.
\]
Combining this with inequalities (\ref{triangle ineq 3}) and (\ref{triangle ineq 4}), where we choose $f \in \cF_{n,R}$ to satisfy the bound $\|f-h\|_{L^\infty(\bB^d)}^2 \lesssim \cE(R)$, we obtain
\[
\|\cT_B\widehat{f}_{n,\lambda_n}-h\|_{L^2(\mu)}^2 \lesssim \cE(R) + \delta_n^2,
\]
holds with probability at least $1-c_1 \exp(-c_2 n\delta_n^2)$. The estimates (\ref{estimate 1}) and (\ref{estimate 2}) then give the desired results.
\end{proof}

Next, we show that it is possible to simplify the optimization problem (\ref{regularized least squares 0}) without worsening the convergence rates. In practice, the neural network $f_\theta\in \NN(N_n)$ is parameterized by $\theta$ in the form (\ref{parameterization}). By Theorem \ref{norm computation}, we can estimate $\kappa(f)$ by $\kappa(\theta)$, which reduces the optimization problem (\ref{regularized least squares 0}) to  
\begin{equation}\label{regularized least squares 1}
\argmin_{\theta\in \bR^{(d+2)N_n}} \frac{1}{n} \sum_{i=1}^n (f_\theta(X_i)- Y_i)^2 + \lambda_n \kappa(\theta).
\end{equation}
One can further simplify the optimization problem by observing that the regularizer $\kappa(\theta) \le \|\theta\|_2^2/2$. It may be easier to solve the following optimization problem 
\begin{equation}\label{regularized least squares 2}
\argmin_{\theta\in \bR^{(d+2)N_n}} \frac{1}{n} \sum_{i=1}^n (f_\theta(X_i)- Y_i)^2 + \frac{\lambda_n}{2} \|\theta\|_2^2.
\end{equation}
The next proposition shows that the two optimization problems (\ref{regularized least squares 1}) and (\ref{regularized least squares 2}) essentially have the same solutions. To the best of our knowledge, this equivalence was first pointed out by \citet[Theorem 1]{neyshabur2015search}.

\begin{proposition}\label{opt equ}
If $\theta^*$ is a solution of the optimization problem (\ref{regularized least squares 1}), then there exits a solution $\widetilde{\theta}^*$ of (\ref{regularized least squares 2}) such that $f_{\theta^*} = f_{\widetilde{\theta}^*}$. Any solution of (\ref{regularized least squares 2}) is also a solution of (\ref{regularized least squares 1}).
\end{proposition}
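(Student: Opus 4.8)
The plan is to exploit the positive homogeneity of the ReLU activation together with the arithmetic--geometric mean inequality. The key observation is that for a single neuron $(a_i, w_i)$ and any scalar $c_i>0$, the rescaling $(a_i,w_i)\mapsto(a_i/c_i,\, c_i w_i)$ leaves $f_\theta$ unchanged (since $\sigma$ is positively homogeneous) and also leaves the contribution $|a_i|\,\|w_i\|_2$ to $\kappa(\theta)$ unchanged, while changing the contribution $a_i^2+\|w_i\|_2^2$ to $\|\theta\|_2^2$. By AM--GM, $\tfrac12(a_i^2+\|w_i\|_2^2)\ge |a_i|\,\|w_i\|_2$ always, with equality if and only if $|a_i|=\|w_i\|_2$; summing over neurons gives $\kappa(\theta)\le\tfrac12\|\theta\|_2^2$ for every $\theta$, and one can always rescale each neuron so that equality holds.

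Concretely, I would introduce a \emph{balancing} map $\theta\mapsto\widetilde\theta$ as follows: for each neuron with $a_i\neq0$ and $w_i\neq0$, choose $c_i=\sqrt{|a_i|/\|w_i\|_2}$ so that the rescaled parameters satisfy $|a_i|=\|w_i\|_2$; for each degenerate neuron (with $a_i=0$ or $w_i=0$, which contributes nothing to $f_\theta$) set both $a_i$ and $w_i$ to $0$. Then $f_{\widetilde\theta}=f_\theta$, $\kappa(\widetilde\theta)=\kappa(\theta)$, and $\tfrac12\|\widetilde\theta\|_2^2=\kappa(\widetilde\theta)$.

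For the first claim, given a solution $\theta^*$ of (\ref{regularized least squares 1}), I would set $\widetilde\theta^*$ to be its balancing and check that it solves (\ref{regularized least squares 2}): for any competitor $\theta$, the objective of (\ref{regularized least squares 2}) at $\widetilde\theta^*$ equals the objective of (\ref{regularized least squares 1}) at $\theta^*$ (using $\tfrac12\|\widetilde\theta^*\|_2^2=\kappa(\theta^*)$ and $f_{\widetilde\theta^*}=f_{\theta^*}$), which is at most the objective of (\ref{regularized least squares 1}) at $\theta$ by optimality of $\theta^*$, which is in turn at most the objective of (\ref{regularized least squares 2}) at $\theta$ since $\kappa(\theta)\le\tfrac12\|\theta\|_2^2$; and $f_{\widetilde\theta^*}=f_{\theta^*}$ by construction. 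For the second claim, given a solution $\widetilde\theta^*$ of (\ref{regularized least squares 2}), I would first argue it is automatically balanced --- otherwise its balancing strictly decreases $\|\theta\|_2^2$ while leaving the data-fit term and $f_\theta$ fixed, contradicting optimality (the degenerate cases $a_i=0\neq w_i$ or $w_i=0\neq a_i$ are handled likewise, by zeroing out the neuron) --- so that $\tfrac12\|\widetilde\theta^*\|_2^2=\kappa(\widetilde\theta^*)$. Then for any $\theta$, comparing $\widetilde\theta^*$ against the balancing $\widetilde\theta$ of $\theta$ in (\ref{regularized least squares 2}) and translating both objectives into $\kappa$-form (using balancedness on both sides) shows that $\widetilde\theta^*$ has the smallest value of the objective in (\ref{regularized least squares 1}), hence solves it.

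The argument is essentially bookkeeping: there is no substantive estimate involved. The only delicate points are the precise definition of the balancing map --- especially the treatment of degenerate neurons --- and verifying carefully that $f_\theta$ is genuinely preserved under the rescaling and that the AM--GM step is strict exactly when a neuron is unbalanced. This, rather than anything deep, is where I expect the mild obstacle to lie.
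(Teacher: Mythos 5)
Your proposal is correct and coincides with the paper's proof: the ``balancing'' map you introduce is exactly the paper's rescaling $r(\theta)$ (neuronwise rescaling so that $|a_i|=\|w_i\|_2$, with degenerate neurons zeroed out), the AM--GM observation $\kappa(\theta)\le\tfrac12\|\theta\|_2^2$ with equality precisely on balanced $\theta$ is the same identity the paper uses, and both directions proceed by the same chain of inequalities. No substantive difference.
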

\begin{proof}
Let us denote $N=N_n$ for simplicity. Due to the positive homogeneity of the ReLU function, we can always rescale the parameter $\theta=(a_1,w_1^\intercal,\dots,a_N,w_N^\intercal)^\intercal$ to the parameter $(c_1a_1,c_1^{-1}w_1^\intercal,\dots,c_Na_N,c_N^{-1}w_N^\intercal)^\intercal$ with $c_i>0$, without altering the function $f_\theta$. We use $r(\theta) = (\widetilde{a}_1,\widetilde{w}_1^\intercal,\dots,\widetilde{a}_N,\widetilde{w}_N^\intercal)^\intercal$ to denote the rescaled parameter of $\theta$ that satisfies $|\widetilde{a}_i|=\|\widetilde{w}_i\|_2$ for all $1\le i\le N$ (if $|a_i|\|w_i\|_2=0$, we let $|\widetilde{a}_i|=\|\widetilde{w}_i\|_2=0$). Note that we always have $\kappa(\theta) = \kappa(r(\theta)) = \|r(\theta)\|_2^2/2 \le \|\theta\|_2^2/2$.

Let us denote the empirical square loss of $f_\theta$ by $\cL_n(f_\theta)=\frac{1}{n} \sum_{i=1}^n (f_\theta(X_i)- Y_i)^2$ for convenience. If $\theta^*$ is a solution of (\ref{regularized least squares 1}), then $r(\theta^*)$ is also a solution of (\ref{regularized least squares 1}). Thus, for any $\theta$, we have 
\begin{align*}
\cL_n(f_{r(\theta^*)}) + \frac{\lambda_n}{2} \|r(\theta^*)\|_2^2 &= \cL_n(f_{r(\theta^*)}) + \lambda_n \kappa(r(\theta^*)) \\
&\le \cL_n(f_{\theta}) + \lambda_n \kappa(\theta) \\
&\le \cL_n(f_{\theta}) + \frac{\lambda_n}{2} \|\theta\|_2^2,
\end{align*}
which shows that $r(\theta^*)$ is a solution of (\ref{regularized least squares 2}). Conversely, if $\theta^*$ is a solution of (\ref{regularized least squares 2}), then $r(\theta^*) = \theta^*$ because $\|r(\theta)\|_2^2 \le \|\theta\|_2^2$. Thus, for any $\theta$,
\begin{align*}
\cL_n(f_{\theta^*}) + \lambda_n \kappa(\theta^*) &= \cL_n(f_{\theta^*}) + \frac{\lambda_n}{2} \|\theta^*\|_2^2 \\
&\le \cL_n(f_{r(\theta)}) + \frac{\lambda_n}{2} \|r(\theta)\|_2^2 \\
&= \cL_n(f_{\theta}) + \kappa(\theta),
\end{align*}
which shows that $\theta^*$ also minimizes (\ref{regularized least squares 1}).
\end{proof}

Using the same proof idea as Theorem \ref{main theorem regularized}, one can show that the solutions of the regularized optimization problems (\ref{regularized least squares 1}) and (\ref{regularized least squares 2}) can also achieve the minimax optimal rates for learning functions in $\cH^\alpha$ or $\cF_\sigma(1)$. The fundamental reason is that these estimators satisfy the base inequality (\ref{base inequality reg}) up to constant factors.

\begin{corollary}\label{corollary reg}
Let $\widehat{f}_{n,\lambda_n} = f_{\widehat{\theta}_{n,\lambda_n}}$, where $\widehat{\theta}_{n,\lambda_n}$ is a solution of the optimization problem (\ref{regularized least squares 1}) or (\ref{regularized least squares 2}). Then the conclusion of Theorem \ref{main theorem regularized} also holds.
\end{corollary}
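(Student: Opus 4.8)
The plan is to show that any estimator $\widehat{f}_{n,\lambda_n} = f_{\widehat{\theta}_{n,\lambda_n}}$ arising from (\ref{regularized least squares 1}) or (\ref{regularized least squares 2}) satisfies the base inequality (\ref{base inequality reg}) up to an absolute constant, and then to re-run the proof of Theorem \ref{main theorem regularized} verbatim. By Proposition \ref{opt equ}, every solution of (\ref{regularized least squares 2}) is also a solution of (\ref{regularized least squares 1}), so it suffices to treat the case where $\widehat{\theta}_{n,\lambda_n}$ solves (\ref{regularized least squares 1}).

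First I would use Theorem \ref{norm computation}. Given any $f\in \NN(N_n)$, choose $\theta$ with $f_\theta = f$ and $\kappa(\theta)\le 3\kappa(f)$. Combining the optimality of $\widehat{\theta}_{n,\lambda_n}$ for (\ref{regularized least squares 1}) with the trivial bound $\kappa(f_{\widehat{\theta}_{n,\lambda_n}})\le \kappa(\widehat{\theta}_{n,\lambda_n})$, we obtain
\[
\frac1n\sum_{i=1}^n(\widehat{f}_{n,\lambda_n}(X_i)-Y_i)^2 + \lambda_n\kappa(\widehat{f}_{n,\lambda_n}) \le \frac1n\sum_{i=1}^n(f(X_i)-Y_i)^2 + 3\lambda_n\kappa(f).
\]
Substituting $Y_i = h(X_i)+\eta_i$ and rearranging, this is exactly inequality (\ref{base inequality reg c_0}) with $c_0 = 3$, since $\|f-h\|_{L^2(\mu_n)}^2 + 3\lambda_n\kappa(f)\le 3\bigl(\|f-h\|_{L^2(\mu_n)}^2 + \lambda_n\kappa(f)\bigr)$ and the correlation term is dominated by its absolute value.

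From here the argument is identical to the proof of Theorem \ref{main theorem regularized}, but invoking Lemma \ref{oracle inequality reg} with $c_0 = 3$ rather than $c_0 = 1$: choose $\delta_n = \delta_n(R)$ via Theorem \ref{local Gc bound}, deduce $\|\widehat{f}_{n,\lambda_n}-h\|_{L^2(\mu_n)}^2 + \lambda_n\kappa(\widehat{f}_{n,\lambda_n}) \lesssim \cE(R) + \delta_n^2$ (now with the harmless absolute constant $1+2c_0 = 7$), hence $M_n := \kappa(\widehat{f}_{n,\lambda_n})\lesssim R$, then apply Lemma \ref{uniform law} to $\star(\cT_{2B}\cF_\sigma(M_n+R))$ and combine with (\ref{triangle ineq 3}), (\ref{triangle ineq 4}) and the approximation estimates of Theorem \ref{app}. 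The choices of $R$, $N_n$ and $\lambda_n$ in the two cases $\cH = \cH^\alpha$ and $\cH = \cF_\sigma(1)$ are exactly those in (\ref{estimate 1}) and (\ref{estimate 2}), so the rates and probability bounds of Theorem \ref{main theorem regularized} follow.

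The only genuinely new step is the first one, and the mild obstacle it handles is that we cannot recover the exact base inequality (\ref{base inequality reg}): passing from the parameter penalty $\kappa(\theta)$ in (\ref{regularized least squares 1}) to the function penalty $\kappa(f)$ costs a factor $3$ on the comparison function, by Theorem \ref{norm computation}. This is precisely why Lemma \ref{oracle inequality reg} was stated with a general constant $c_0\ge 1$; once it is applied with $c_0 = 3$, every subsequent estimate changes only by absolute constants, and the conclusion of Theorem \ref{main theorem regularized} carries over unchanged.
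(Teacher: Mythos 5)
Your proposal is correct and follows essentially the same route as the paper's own proof: reduce to problem (\ref{regularized least squares 1}) via Proposition \ref{opt equ}, pass from the parameter penalty $\kappa(\theta)$ to the function norm $\kappa(f)$ at a factor-of-three cost using Theorem \ref{norm computation} together with $\kappa(\widehat{f}_{n,\lambda_n})\le\kappa(\widehat{\theta}_{n,\lambda_n})$, thereby obtain (\ref{base inequality reg c_0}) with $c_0=3$, and then invoke Lemma \ref{oracle inequality reg} and re-run the remainder of Theorem \ref{main theorem regularized}.
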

\begin{proof}
By Proposition \ref{opt equ}, it is enough to consider the case that $\widehat{\theta}_{n,\lambda_n}$ is a solution of the optimization problem (\ref{regularized least squares 1}). As in the proof of Theorem \ref{main theorem regularized}, the inequalities (\ref{triangle ineq 3}) and (\ref{triangle ineq 4}) still hold for $ \widehat{f}_{n,\lambda_n}$. The only difference is the base inequality
\[
\|\widehat{f}_{n,\lambda_n}-h\|_{L^2(\mu_n)}^2 + \lambda_n \kappa(\widehat{\theta}_{n,\lambda_n}) \le \|f-h\|_{L^2(\mu_n)}^2 + \lambda_n \kappa(\theta) + \frac{2}{n} \sum_{i=1}^n \eta_i \left(\widehat{f}_{n,\lambda_n}(X_i)- f(X_i) \right),
\]
where $\theta \in \bR^{(d+2)N_n}$ satisfy $f_\theta =f\in \NN(N_n)$. 
By Theorem \ref{norm computation}, we can choose some $\theta$ such that $\kappa(\theta) \le 3\kappa(f)$. Hence,
\[
\|\widehat{f}_{n,\lambda_n}-h\|_{L^2(\mu_n)}^2 + \lambda_n \kappa(\widehat{f}_{n,\lambda_n}) \le 3 \left(\|f-h\|_{L^2(\mu_n)}^2 + \lambda_n \kappa(f)\right) + \frac{2}{n} \sum_{i=1}^n \eta_i \left(\widehat{f}_{n,\lambda_n}(X_i)- f(X_i) \right),
\]
where we also use $\kappa(\widehat{f}_{n,\lambda_n}) \le \kappa(\widehat{\theta}_{n,\lambda_n})$. Therefore, we can apply Lemma \ref{oracle inequality reg} with $c_0=3$. The remained proof is the same as Theorem \ref{main theorem regularized}.
\end{proof}

\section{Discussions and Related Works}\label{sec: discussion}

\textit{Infinitely wide neural networks.} The seminal work of \citet{barron1993universal} obtained dimension independent rate of approximation by shallow neural networks (with sigmoidal activations) for functions $h$, whose Fourier transform $\widehat{h}$ satisfies $\int_{\bR^d} \|\omega\||\widehat{h}(\omega)|d\omega<\infty$. Barron's results have been improved and extended to functions of certain integral forms, such as (\ref{integral form}), and ReLU activation in recent years \citep{klusowski2018approximation,makovoz1996random,mhaskar2004tractability,mhaskar2020dimension,siegel2020approximation}. These results lead to the study of variation spaces corresponding to neural networks \citep{bach2017breaking,siegel2023characterization,siegel2023optimal,yang2024optimal}, which are also called Barron spaces in \citet{weinan2019priori,weinan2022barron}. We refer the reader to \citet{siegel2022sharp} for detail discussions on these spaces from an approximation theory perspective.

There is another line of works \citep{bartolucci2023understanding,ongie2020function,parhi2021banach,parhi2022what,parhi2023minimax,savarese2019how} trying to characterize infinitely wide neural networks from a functional analysis point of view. These papers derived representer theorems showing that (finitely wide) neural networks are sparse solutions to data fitting problems with total variation regularization in the Radon domain:
\begin{equation}\label{opt RTV}
\argmin_f \frac{1}{n} \sum_{i=1}^n \ell(Y_i,f(X_i)) + \lambda \RTV(f),
\end{equation}
where the loss function $\ell$ is convex and coercive in its second argument and $\RTV$ is a seminorm defined through the Radon transform. Note that the null space of the seminorm $\RTV$ is the space of affine functions \citep[Lemma 19]{parhi2021banach}. For $f\in \NN(N)$ in the reduced form (\ref{reduced form}), $\RTV(f)$ is exactly the variation $\sum_{k=1}^K |c_k|$ by \citet[Lemma 25]{parhi2021banach}, and hence $\kappa(f) \approx \RTV(f) + \|\mbox{linear part of } f\|$ by inequality (\ref{norm equ}). Thus, for the square loss $\ell$, we can view the optimization problem (\ref{opt RTV}) as the problem (\ref{regularized least squares 0}) with width $N_n=\infty$.

Infinitely wide neural networks are also used in the study of training dynamics of neural networks. The neural tangent kernel theory demonstrates that the evolution of a neural network during gradient descent training can be described by a kernel in the infinite width limit \citep{allenzhu2019convergence,du2019gradient,jacot2018neural,oymak2020towards}. The mean field analysis approximates the evolution of the network weights by a gradient flow, defined through a partial differential equation, in the Wasserstein space of probability distributions \citep{mei2018mean,mei2019mean,sirignano2020mean}. These theories established the convergences of gradient descent training of over-parameterized neural networks in certain scalings of network weights \citep{chizat2019lazy}. It would be interesting to see whether these techniques can be applied to study the optimization problems (\ref{least squares}) and (\ref{regularized least squares 1}).

\textit{Nonparametric regression using neural networks.} As discussed in the introduction, rates of convergence of neural network regression estimators have been analyzed by many authors. Minimax optimal rates have been established for sparse deep neural networks \citep{schmidthieber2020nonparametric,suzuki2019adaptivity}, for fully connected deep neural networks \citep{kohler2021rate} and for shallow neural networks \citep{yang2024optimal}. It has also been shown that deep neural networks are able to circumvent the curse of dimensionality under certain conditions, for example, when the intrinsic dimension is low \citep{chen2022nonparametric,jiao2023deep,nakada2020adaptive} or the regression function has certain hierarchical structures \citep{bauer2019deep,schmidthieber2020nonparametric,kohler2021rate}. In the over-parameterized regime, \citet{yang2024optimal} proved nearly optimal rates by using (deep or shallow) neural networks with weight constraints. The paper of \citet{parhi2023minimax} established the minimax optimal rate for the variation space $\cF_{\sigma}$ using infinitely wide shallow neural networks. Our results can be viewed as generalizations of this result to H\"older classes $\cH^\alpha$ and finitely wide networks. Note that all these results are established for least squares, where either the network architecture or the network weights are properly constrained. For regularized least squares, the recent work \citep{zhang2023deep} proved convergence rates for deep neural networks with a regularization similar to this paper. The rates they obtained are asymptotically optimal as the depth goes to infinity. But they did not show how to choose the regularization parameter. Our results show that regularized least squares with shallow networks can achieve the minimax optimal rates, at least for function classes with low smoothness. Our proof method (see Lemma \ref{oracle inequality reg}) may be used to study other regularized estimators. A particular interesting direction is to generalize our results to deep neural networks.

\textit{Interpolation and regularization.} The recent progress in deep learning revealed that over-parameterized models can interpolate noisy data and yet generalize well \citep{belkin2019reconciling,zhang2017understanding}. Motivated by this phenomenon, which is called benign overfitting, there has recently been a line of works on understanding theoretical mechanisms for the good generalization performance of interpolating models (see \citealt{bartlett2021deep,belkin2021fit} for reviews). In particular, benign overfitting has been theoretically established for many models, such as Nadaraya-Watson smoothing estimator \citep{belkin2019does}, linear regression \citep{bartlett2020benign,hastie2022surprises}, kernel regression \citep{liang2020just} and random features models \citep{mei2022generalization}. However, this certainly does not mean that any interpolator can generalize well, simply because there are too many estimators that interpolate observed data. Hence, explicit or implicit regularization is necessary to obtain estimators with good generalization.

In fact, most works on benign overfitting studied the minimal norm interpolations. In our setting, this corresponds to the solutions of 
\begin{equation}\label{minimal norm int}
\argmin_{f\in \NN(N_n)} \kappa(f), \quad s.t. \quad f(X_i)=Y_i,\ i=1,\dots,n,
\end{equation}
which can be viewed as the limiting case of the optimization (\ref{regularized least squares 0}) with $\lambda_n=0$. The setting of over-parameterized neural networks allows us to make an explicit comparison between interpolating and regularized solutions. We think it is a promising research direction to study rates of convergence for the minimal norm interpolation (\ref{minimal norm int}) and compare them with our results. Note that, for linear regression, \citet{hastie2022surprises} showed that optimally regularized ridge regression dominates the minimum $\ell^2$-norm interpolation in test performances. It is reasonable to expect that such a phenomenon also occurs for neural networks (since the regularized least squares (\ref{regularized least squares 0}) achieves the minimax rates). On the other hand, some kernel interpolations have been proven to be inconsistent in low dimensions \citep{buchholz2022kernel,rakhlin2019consistency}. It is an interesting problem to determine whether the minimal norm solutions of (\ref{minimal norm int}) are consistent and minimax optimal.

\section{Proofs}\label{sec: proofs}

This section gives the proofs of Theorem \ref{norm computation}, Theorem \ref{local Gc bound}, Lemma \ref{oracle inequality} and Lemma \ref{oracle inequality reg}.

\subsection{Proof of Theorem \ref{norm computation}} \label{sec: proof of norm}

Recall that $S_-$ is defined by (\ref{S_-}), $S_+=-S_-$ and $S_0 =\bS^d \setminus (S_-\cup S_+)$. Notice that $S_-$ is the set of vectors $v\in \bS^d$ such that $(x^\intercal,1) v \le 0$ for all $x\in \bB^d$. Thus, $v\in \bS^d$ is in $S_+$ if and only if $(x^\intercal,1) v \ge 0$ holds for all $x\in \bB^d$. And, for any $v\in S_0$, $\sigma((x^\intercal,1) v)$ is a nonlinear function on $\bB^d$. Let us first characterize the measures $\tau\in\cM_{disc}(\bS^d)$ such that $f_\tau=0$.

\begin{lemma}\label{zero disc measures}
Let $\tau=\sum_{i=1}^N a_i\delta_{u_i} \in\cM_{disc}(\bS^d)$, where $u_1,\dots,u_N \in \bS^d$ are distinct. Then, $f_\tau(x) = \sum_{i=1}^N a_i \sigma((x^\intercal,1)u_i) =0$ for all $x\in \bB^d$ if and only if the following two conditions hold
\begin{enumerate}[label=\textnormal{(\arabic*)},parsep=0pt]
\item For any $u_i\in S_0$, there exists $j\neq i$ such that $u_j = -u_i$ and $a_j = -a_i$.

\item $\frac{1}{2} \sum_{u_i \in S_0} a_iu_i + \sum_{u_i \in S_+} a_iu_i = 0$.
\end{enumerate}
\end{lemma}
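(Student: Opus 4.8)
The plan is to sort the atoms of $\tau$ according to the partition $\bS^d = S_-\cup S_0\cup S_+$ and treat each piece separately. The preliminary observations are: for $v\in S_-$ one has $\sigma((x^\intercal,1)v)\equiv 0$ on $\bB^d$; for $v\in S_+$ the map $x\mapsto\sigma((x^\intercal,1)v)$ coincides with the affine function $x\mapsto(x^\intercal,1)v$ on $\bB^d$; and for $v\in S_0$ it is non-affine, its non-differentiability set being the affine hyperplane $H_v:=\{x\in\bR^d:(x^\intercal,1)v=0\}$, which cuts through the interior of $\bB^d$ (this is exactly what places $v$ in $S_0$). Using the identity $\sigma(t)=\tfrac12(t+|t|)$, these facts give, for all $x\in\bB^d$,
\[
f_\tau(x)=\tfrac12\sum_{u_i\in S_0}a_i\bigl|(x^\intercal,1)u_i\bigr|+(x^\intercal,1)\Bigl(\tfrac12\sum_{u_i\in S_0}a_iu_i+\sum_{u_i\in S_+}a_iu_i\Bigr),
\]
that is, a decomposition of $f_\tau$ into a ``pure kink'' part and an affine part.

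The ``if'' direction is then immediate: under (1) the $S_0$ atoms split into antipodal pairs $\{u,-u\}$ with coefficients $\{a,-a\}$, and since $|(x^\intercal,1)u|=|(x^\intercal,1)(-u)|$ the kink part cancels pairwise, while (2) says the coefficient vector of the affine part vanishes; hence $f_\tau\equiv 0$.

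For the ``only if'' direction, assume $f_\tau\equiv 0$; discarding null atoms, I may assume every $a_i\neq 0$. Fix an atom $u_i\in S_0$. Since two unit vectors determine the same hyperplane $H_{u_i}=H_{u_j}$ precisely when $u_j=\pm u_i$, the set $H_{u_i}\cap\mathrm{int}\,\bB^d$ (a nonempty relatively open piece of a $(d-1)$-plane) meets each of the finitely many other kink hyperplanes $H_{u_j}$ ($u_j\in S_0$, $u_j\neq\pm u_i$) in a set of dimension $\le d-2$, so I can pick $x_0\in H_{u_i}\cap\mathrm{int}\,\bB^d$ lying on no other kink hyperplane, and then a small open ball $U\ni x_0$ inside $\mathrm{int}\,\bB^d$ that misses every $H_{u_j}$ with $u_j\neq\pm u_i$. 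On $U$ every summand of $f_\tau$ is affine except the contributions of $u_i$ and (if present) $-u_i$, and collecting the latter via $\sigma(t)=\tfrac12(t+|t|)$ gives
\[
f_\tau|_U=(\text{affine function of }x)+\tfrac12\bigl(a_i+a_{-u_i}\bigr)\bigl|(x^\intercal,1)u_i\bigr|,
\]
where $a_{-u_i}$ denotes the coefficient of $\delta_{-u_i}$ in $\tau$, taken to be $0$ if $-u_i$ is not among the $u_\ell$. Because $H_{u_i}$ passes through the open set $U$, the function $|(x^\intercal,1)u_i|$ is not affine on $U$ and is therefore linearly independent of the affine functions there; hence $f_\tau|_U\equiv 0$ forces $a_i+a_{-u_i}=0$. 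As $a_i\neq 0$, this means $-u_i$ is one of the atoms (necessarily in $S_0=-S_0$) with coefficient $-a_i$, which is exactly (1). Plugging (1) back into the displayed decomposition collapses the kink part and leaves $(x^\intercal,1)w\equiv 0$ on $\bB^d$ with $w=\tfrac12\sum_{u_i\in S_0}a_iu_i+\sum_{u_i\in S_+}a_iu_i$; since $\{(x^\intercal,1):x\in\bB^d\}$ spans $\bR^{d+1}$, this gives $w=0$, i.e.\ (2).

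The step I expect to be the main obstacle is this genericity argument in the ``only if'' part: making rigorous that kinks carried by pairwise distinct hyperplanes cannot cancel, through the choice of $x_0$ and $U$ avoiding all other kink hyperplanes together with the linear independence of $|(x^\intercal,1)u_i|$ from affine functions on $U$. A cleaner conceptual route, which I would at least mention, is to differentiate $f_\tau$ twice in the distributional sense on $\mathrm{int}\,\bB^d$: one has $\Delta|(x^\intercal,1)u|=2\|\tilde u\|_2\,\mathcal H^{d-1}|_{H_u}$ with $\tilde u$ the first $d$ coordinates of $u$, and these surface measures are mutually singular for distinct hyperplanes $H_u$, so $\Delta f_\tau=0$ yields the per-hyperplane cancellation of coefficients directly, after which (1) and (2) follow as above.
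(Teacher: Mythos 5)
Your proof is correct and takes essentially the same route as the paper: both decompose $f_\tau$ into a kink (pure $\bigl|(x^\intercal,1)u\bigr|$) part plus an affine part, prove condition (1) by localizing at a generic point of each kink hyperplane $H_{u_i}$ so that only the $\pm u_i$ contributions can be non-affine there, and then deduce condition (2) from the vanishing of the remaining affine part. Your choice of the identity $\sigma(t)=\tfrac12(t+|t|)$ versus the paper's $a_i(x^\intercal,1)u_i=a_i\sigma((x^\intercal,1)u_i)-a_i\sigma(-(x^\intercal,1)u_i)$ is cosmetic, and the distributional-Laplacian alternative you sketch at the end is a valid variant but not the paper's argument.
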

\begin{proof}
Without loss of generality, we assume that $u_i\in S_0\cup S_+$ and $a_i\neq 0$ for $i=1,\dots,N$. Notice that $a_i(x^\intercal,1)u_i = a_i\sigma((x^\intercal,1)u_i) -  a_i\sigma(-(x^\intercal,1)u_i)$. If $\tau$ satisfies condition (1), then
\[
\sum_{u_i \in S_0} a_i (x^\intercal,1)u_i = 2 \sum_{u_i \in S_0} a_i \sigma((x^\intercal,1)u_i).
\]
If, in addition, $\tau$ satisfies condition (2), then
\begin{align*}
f_\tau(x) &= \sum_{u_i \in S_0} a_i \sigma((x^\intercal,1)u_i) + \sum_{u_i \in S_+} a_i (x^\intercal,1)u_i \\
&= \frac{1}{2} \sum_{u_i \in S_0} a_i (x^\intercal,1)u_i + \sum_{u_i \in S_+} a_i (x^\intercal,1)u_i = 0.
\end{align*}

Conversely, assume $f_\tau =0$ on $\bB^d$. Notice that $\sigma((x^\intercal,1)u_i)$ is a piecewise linear function. We denote its ``break points'' by $A_i = \{x: \|x\|_2<1, (x^\intercal,1)u_i =0 \}$. One can check that $A_i$ is non-empty if and only if $u_i\in S_0$. For any $u_i \in S_0$, let $j$ be the index such that $u_j=-u_i$ (it may not exist), then we can find a point $x_i \in A_i = A_j$ such that $x_i \notin A_k$ for any $k\neq i,j$. Consider the decomposition
\[
0= f_\tau(x) = a_i \sigma((x^\intercal,1)u_i) + a_j \sigma((x^\intercal,1)u_j) + \sum_{k\neq i,j} a_k \sigma((x^\intercal,1)u_k).
\]
If $u_j$ does not exist, then the first term is nonlinear in a neighborhood of $x_i$ and the remain terms are linear in this neighborhood, which is a contradiction. Hence, $u_j$ exists and the summation of the first two terms is linear, which implies $a_j=-a_i$ and condition (1) holds. As a consequence, 
\begin{align*}
0= f_\tau(x) &= \frac{1}{2} \sum_{u_i \in S_0} a_i (x^\intercal,1)u_i + \sum_{u_i \in S_+} a_i (x^\intercal,1)u_i \\
&= (x^\intercal,1) \left(\frac{1}{2} \sum_{u_i \in S_0} a_iu_i + \sum_{u_i \in S_+} a_iu_i \right)
\end{align*}
holds for all $x\in \bB^d$. Thus, $\frac{1}{2} \sum_{u_i \in S_0} a_iu_i + \sum_{u_i \in S_+} a_iu_i = 0$, which proves condition (2).
\end{proof}

We now give a proof of Theorem \ref{norm computation}. For any $f\in \NN(N)$, we can always represent it as
\begin{equation}\label{temp form 1}
f(x) = \sum_{\widetilde{v}_i \in S_0} \widetilde{c}_i \sigma((x^\intercal,1)\widetilde{v}_i) + \sum_{\widetilde{v}_i \in S_+} \widetilde{c}_i (x^\intercal,1)\widetilde{v}_i.
\end{equation}
Without loss of generality, we assume that the $\widetilde{v}_i$ are distinct. If there exist $\widetilde{v}_i$ and $\widetilde{v}_j$ satisfying $\widetilde{v}_j = - \widetilde{v}_i \in S_0$, then 
\[
\widetilde{c}_i \sigma((x^\intercal,1)\widetilde{v}_i) + \widetilde{c}_j \sigma((x^\intercal,1)\widetilde{v}_j) = (\widetilde{c}_i + \widetilde{c}_j) \sigma((x^\intercal,1)\widetilde{v}_i) - \widetilde{c}_j (x^\intercal,1)\widetilde{v}_i.
\]
Using this equality, we can reduce (\ref{temp form 1}) to the following form
\begin{equation}\label{temp form 2}
f(x) = \sum_{k=1}^K c_k \sigma((x^\intercal,1)v_k) + (x^\intercal,1)w,
\end{equation}
where $0\le K\le N$, $c_1,\dots, c_K \neq 0$, $w\in \bR^{d+1}$ and $v_1,\dots, v_K\in S_0$ satisfy $v_i \neq \pm v_k$ for $1\le i\neq k\le K$. Note that, if $K=N$, then $w=0$. We are going to construct a measure $\mu\in \cM_{disc}(\bS^d)$ supported on at most $N$ points in $S_0\cup S_+$ such that $f_\mu=f$ and estimate $\kappa(f)$ by $\|\mu\|$. By Lemma \ref{zero disc measures}, any measure $\nu$ satisfies $f_\nu =f_\mu=f$ if and only if $\nu = \mu+\tau$ for some $\tau$ satisfying the two conditions in Lemma \ref{zero disc measures}. Hence, $\kappa(f) = \inf_{\tau} \|\mu+\tau\|$. Without loss of generality, we also assume that $\tau$ is supported on $S_0\cup S_+$. By Lemma \ref{zero disc measures}, we can decompose $\tau$ as
\begin{align*}
\tau &= \sum_{k=1}^K a_k \left(\delta_{v_k} - \delta_{-v_k}\right) + \sum_{j=1}^{J_1} b_j \delta_{u_j} + \sum_{j=J_1+1}^J b_j (\delta_{u_j} - \delta_{-u_j}) \\
&=: \tau_1 + \tau_2 + \tau_3,
\end{align*}
where $u_j\in S_+$ for $1\le j\le J_1$, and $u_j\in S_0\setminus \{\pm v_1,\dots,\pm v_K\}$ for $J_1+1\le j\le J$ satisfies $u_j \neq \pm u_i$ for any $i\neq j$ and
\begin{equation}\label{temp linear dep}
\sum_{k=1}^K a_k v_k + \sum_{j=1}^J b_j u_j =0.
\end{equation}
We continue the proof in four different cases.

\textbf{Case 1}: $w=0$. We let $\mu = \sum_{k=1}^K c_k \delta_{v_k}$, then 
\begin{align*}
\|\mu+\tau\| &= \|\mu +\tau_1\| + \|\tau_2+\tau_3\| \ge \|\mu+\tau_1\| \\
&= \sum_{k=1}^K \left(|c_k+a_k| + |a_k|\right) \ge \sum_{k=1}^K |c_k| = \|\mu\|,
\end{align*}
which implies $\kappa(f) = \|\mu\|$.

\textbf{Case 2}: $w/\|w\|_2 \in S_+ \cup S_-$. We let $\mu = \sum_{k=1}^K c_k \delta_{v_k} + c_{K+1} \delta_{v_{K+1}} =: \mu_1 +\mu_2$, where, if $w/\|w\|_2 \in S_+$, we take $v_{K+1} = w/\|w\|_2 \in S_+$ and $c_{K+1} = \|w\|_2$; otherwise, we take $v_{K+1} = -w/\|w\|_2\in S_+$ and $c_{K+1} = -\|w\|_2$. Since $w\neq 0$, $K\le N-1$ and $\mu$ is supported on at most $K+1\le N$ points. Let $m \in \{1,\dots,J_1\}$ be the index such that $u_m = v_{K+1}$. (If such $m$ does not exists, we set $m=0$ and $b_m=0$ in the following.) Similar to Case 1, we have 
\begin{align*}
\|\mu+\tau\| &= \|\mu_1 +\tau_1\| + \|\mu_2+b_m \delta_{u_m}\| + \|\tau_2 - b_m \delta_{u_m} + \tau_3\| \\
&\ge \sum_{k=1}^K \left(|c_k+a_k| + |a_k|\right) + |c_{K+1} +b_m| + \sum_{j\neq m} |b_j|.
\end{align*}
By equality (\ref{temp linear dep}), 
\[
w = c_{K+1} v_{K+1} =  \sum_{k=1}^K a_k v_k + (c_{K+1}+b_m) u_m + \sum_{j\neq m} b_j u_j,
\]
which implies 
\[
\|\mu+\tau\| \ge \sum_{k=1}^K |a_k| + |c_{K+1} +b_m| + \sum_{j\neq m} |b_j| \ge \|w\|_2.
\]
Combining with the bound $\|\mu+\tau\| \ge \sum_{k=1}^K |c_k|$, we conclude that
\begin{equation}\label{kappa lower bound}
\kappa(f) = \inf_{\tau} \|\mu+\tau\| \ge \|w\|_2 \lor \sum_{k=1}^K |c_k|.
\end{equation}
Therefore,
\[
\kappa(f) \ge \frac{1}{2} \sum_{k=1}^K |c_k| + \frac{1}{2} \|w\|_2 = \frac{1}{2} \|\mu\|.
\]

\textbf{Case 3}: $w=cv_i$ for some $1\le i \le K$ and $c\neq 0$. We let $\mu = \sum_{k=1}^K c_k \delta_{v_k} +c(\delta_{v_i}-\delta_{-v_i})$, which is supported on at most $K+1\le N$ points, then $\|\mu\| = \sum_{k\neq i} |c_k| + |c_i+c| + |c|$ and 
\begin{align*}
\|\mu+\tau\| &= \|\mu + \tau_1\| + \|\tau_2+ \tau_3\| \\
&\ge \sum_{k\neq i} \left(|c_k+a_k| + |a_k|\right) + |c_i+c+a_i| + |c+a_i| + \sum_{j=1}^J |b_j|.
\end{align*}
Similar to Case 2, by equality (\ref{temp linear dep}), 
\[
w = c v_i =  \sum_{k\neq i} a_k v_k + (c+a_i) v_i + \sum_{j=1}^J b_j u_j,
\]
which implies 
\[
\|\mu+\tau\| \ge \sum_{k\neq i} |a_k| +|c+a_i| + \sum_{j=1}^J |b_j| \ge \|w\|_2.
\]
Combining with $\|\mu+\tau\| \ge \sum_{k=1}^K |c_k|$, we can obtain the bound (\ref{kappa lower bound}). Since $|c|=\|w\|_2$, 
\[
\|\mu\| = \sum_{k\neq i} |c_k| + |c_i+c| + |c| \le \sum_{k=1}^K |c_k| + 2\|w\|_2 \le 3 \kappa(f).
\]

\textbf{Case 4}: $v_{K+1} := w/\|w\|_2 \in S_0\setminus \{\pm v_1,\dots,\pm v_K\}$. By the construction of (\ref{temp form 2}), it holds that $K\le N-2$. We let $\mu = \sum_{k=1}^K c_k \delta_{v_k} +\|w\|_2(\delta_{v_{K+1}}-\delta_{-v_{K+1}})=: \mu_1+\mu_2$, which is supported on $K+2\le N$ points. Let $m \in \{J_1+1,\dots,J\}$ be the index such that $u_m = \pm v_{K+1}$. (If such $m$ does not exists, we set $m=0$ and $b_m=0$ in the following.) Then,
\begin{align*}
\|\mu+\tau\| &= \|\mu_1 + \tau_1\| + \|\mu_2 + b_m (\delta_{u_m}-\delta_{-u_m})\| + \| \tau _2 + \tau_3 - b_m (\delta_{u_m}-\delta_{-u_m})\| \\
&\ge \sum_{k=1}^K \left(|c_k+a_k| + |a_k|\right) + |\|w\|_2 \pm b_m| + \sum_{j\neq m} |b_j|.
\end{align*}
Similar to Case 2, by using equality (\ref{temp linear dep}), we can obtain the bound (\ref{kappa lower bound}). Therefore,
\[
\|\mu\| = \sum_{k=1}^K |c_k| + 2\|w\|_2 \le 3 \kappa(f).
\]

In summary, we have shown that, for the parameterization (\ref{temp form 2}), there exists a measure $\mu$ supported on at most $N$ points such that $f_\mu =f$ and 
\[
\|w\|_2 \lor \sum_{k=1}^K |c_k| \le \kappa(f) \le \|\mu\| \le \sum_{k=1}^K |c_k| + 2\|w\|_2.
\] 
It is easy to construct $\theta \in \bR^{(d+2)N}$ such that $f_\mu =f_\theta$ and $\kappa(\theta)=\|\mu\|$. Hence, $\kappa(f) \le \inf \{ \kappa(\theta): f = f_{\theta}, \theta\in \bR^{(d+2)N} \}\le 3\kappa(f)$.

\subsection{Proof of Theorem \ref{local Gc bound}}\label{sec: proof of Gc}

Without loss of generality, we can assume that $\varsigma=1$ by rescaling. Note that the sub-Gaussian assumption $\bE[\exp(\lambda \xi_i)] \le \exp(\lambda^2/2)$ implies $\bE[\xi_i]=0$ and $\bE[\xi_i^2] \le 1$. It is well known that the Gaussian complexity can be bounded by Dudley's entropy integral \citep[Section 5.3.3]{wainwright2019high}. We will provide a bound for the entropy by using Theorem \ref{app}.

The derivation of the entropy integral bound (\ref{Dudley}) below uses essentially the same argument as in \citet[Lemma A.3]{srebro2010smoothness}, which we repeat for the convenience of the reader. For any $\epsilon>0$, let us denote the $\epsilon$-covering number of the set $\cF$ in $L^2(\mu_n)$ norm by $\cN(\epsilon,\cF,\|\cdot\|_{L^2(\mu_n)})$. In other words, there exists a subset $\{f_1,\dots,f_m\} \subseteq \cF$ of minimal size $m=\cN(\epsilon,\cF,\|\cdot\|_{L^2(\mu_n)})$ such that for any $f\in \cF$, there exits $f_j$ satisfying $\|f-f_j\|_{L^2(\mu_n)} \le \epsilon$, which implies
\begin{align*}
\left| \frac{1}{n} \sum_{i=1}^n \xi_i f(X_i) \right| &\le \left| \frac{1}{n} \sum_{i=1}^n \xi_i f_j(X_i) \right| + \left| \frac{1}{n} \sum_{i=1}^n \xi_i (f(X_i)-f_j(X_i)) \right| \\
&\le \max_{j=1,\dots,m} \left| \frac{1}{n} \sum_{i=1}^n \xi_i f_j(X_i) \right| + \sqrt{\frac{\sum_{i=1}^n \xi_i^2}{n}} \epsilon,
\end{align*}
where we use the Cauchy-Schwarz inequality in the second step. Taking the supremum over $f\in \cF$ and then taking expectations over the noise, we obtain
\begin{equation}\label{Dudley pre}
\bE_{\xi_{1:n}} \left[ \sup_{f\in \cF} \left| \frac{1}{n} \sum_{i=1}^n \xi_i f(X_i) \right|\right] \le \bE_{\xi_{1:n}} \left[ \max_{j=1,\dots,m} \left| \frac{1}{n} \sum_{i=1}^n \xi_i f_j(X_i) \right| \right] + \epsilon,
\end{equation}
where we have used $\bE[(\sum_{i=1}^n \xi_i^2)^{1/2}] \le (\sum_{i=1}^n \bE[\xi_i^2])^{1/2} \le \sqrt{n}$. Define a family of zero-mean random variables indexed by $f\in \cF$ as $Z(f):= \frac{1}{\sqrt{n}} \sum_{i=1}^n \xi_i f(X_i)$. Then, $Z(f)$ is a sub-Gaussian process with respect to the metric $\rho_Z(f,f') := \|f-f'\|_{L^2(\mu_n)}$, meaning that
\[
\bE \left[\exp(\lambda(Z(f)-Z(f')))\right] \le \exp(\lambda^2 \rho_Z(f,f')^2/2), \quad \forall f,f'\in \cF, \lambda\in \bR,
\]
which can be proven by basic properties of sub-Gaussian variables \citep[Exercise 2.13]{wainwright2019high}. Observe that the first term on the right-hand side of (\ref{Dudley pre}) is an expected supremum of the sub-Gaussian process $Z(f)$. We can apply a chaining argument as \citet[Proof of Theorem 5.22]{wainwright2019high} to show that
\[
\bE_{\xi_{1:n}} \left[ \max_{j=1,\dots,m} | Z(f_j) | \right] \le 16 \int_{\epsilon/4}^{D/2} \sqrt{\log \cN(t,\cF,\|\cdot\|_{L^2(\mu_n)})} dt,
\]
where $D:=\sup_{f,f'\in \cF}\rho_Z(f,f')$ denotes the diameter. Therefore, 
\begin{equation}\label{Dudley}
\bE_{\xi_{1:n}} \left[ \sup_{f\in \cF} \left| \frac{1}{n} \sum_{i=1}^n \xi_i f(X_i) \right|\right] \le \inf_{\epsilon\ge 0} \left\{ 4\epsilon + \frac{16}{\sqrt{n}} \int_{\epsilon}^{D/2} \sqrt{\log \cN(t,\cF,\|\cdot\|_{L^2(\mu_n)})} dt \right\}.
\end{equation}

In order to apply this bound to the variation space $\cF_{\sigma}(M)$, we need to estimate its covering number. Since $\|f-f'\|_{L^2(\mu_n)}\le \|f-f'\|_{L^\infty(\bB^d)}$, it is enough to provide a bound for the entropy $\log \cN(\epsilon, \cF_{\sigma}(M), \|\cdot\|_{L^\infty(\bB^d)})$. Note that this bound should be independent of the network width. But we will need to first bound the entropy of $\NN(N,M)$ by $N$ and $M$. 
Recall that any $f\in \NN(N,M)$ can be represented in the form 
\[
f(x) = \sum_{i=1}^{N} a_i\sigma((x^\intercal,1)v_i), \quad v_i \in \bS^d,\ \sum_{i=1}^{N} |a_i| \le M.
\]
It is easy to see that there exists $\epsilon/(2\sqrt{2}M)$-cover $V_\epsilon$ of $\bS^d$ in the metric $\|\cdot\|_2$ with cardinality $|V_\epsilon| \lesssim (1+M/\epsilon)^d$. Similarly, there exists $\epsilon/(2\sqrt{2})$-cover $A_\epsilon$ of the set $A=\{a=(a_1,\dots,a_N): \|a\|_1\le M\}$ in the metric $\|\cdot\|_1$ such that $|A_\epsilon|\lesssim (1+M/\epsilon)^N$. Hence, for any $v_i\in \bS^d$ and $a=(a_1,\dots,a_N) \in A$, we can choose $\widetilde{v}_i\in V_\epsilon$ and $\widetilde{a}=(\widetilde{a}_1,\dots,\widetilde{a}_N) \in A_\epsilon$ such that $\|v_i-\widetilde{v}_i\|_2 \le \epsilon/(2\sqrt{2}M)$ and $\|a-\widetilde{a}\|_1 \le \epsilon/(2\sqrt{2})$. Therefore, letting $\widetilde{f}(x) = \sum_{i=1}^{N} \widetilde{a}_i\sigma((x^\intercal,1)\widetilde{v}_i)$, we have for $x\in \bB^d$,
\begin{align*}
|f(x) - \widetilde{f}(x)| &\le \|a-\widetilde{a}\|_1 \sup_{1\le i \le N} |\sigma((x^\intercal,1)v_i)| + \|\widetilde{a}\|_1 \sup_{1\le i \le N} |\sigma((x^\intercal,1)v_i) - \sigma((x^\intercal,1)\widetilde{v}_i)| \\
&\le \epsilon/2 + \sqrt{2} M \sup_{1\le i \le N} \|v_i - \widetilde{v}_i\|_2 \le \epsilon,
\end{align*}
where we use the Lipschitz continuity of ReLU in the second inequality. Thus,
\begin{equation}\label{covering num bound}
\log \cN(\epsilon, \NN(N,M),\|\cdot\|_{L^\infty(\bB^d)}) 
\lesssim N \log(1+M/\epsilon).
\end{equation}

Now, we estimate the entropy of $\cF_{\sigma}(M)$. By Theorem \ref{app},
\[
\sup_{f \in \cF_{\sigma}(M)}\inf_{f_N \in \NN(N,M)} \|f-f_N\|_{L^\infty(\bB^d)} \lesssim M N^{- \frac{d+3}{2d}}.
\]
Hence, for any $0<\epsilon\le M$, we can choose $N \asymp (\epsilon/M)^{-2d/(d+3)}$ such that for any $f\in \cF_{\sigma}(M)$, there exists $f_N \in \NN(N,M)$ satisfying $\|f-f_N\|_{L^\infty(\bB^d)} <\epsilon/2$. By the triangle inequality, any $\epsilon/2$-cover of $\NN(N,M)$ is an $\epsilon$-cover of $\cF_{\sigma}(M)$. Then, the estimate (\ref{covering num bound}) shows
\begin{align*}
\log \cN(\epsilon, \cF_{\sigma}(M), \|\cdot\|_{L^\infty(\bB^d)}) & \le \log \cN(\epsilon/2, \NN(N,M), \|\cdot\|_{L^\infty(\bB^d)}) \\
& \lesssim N \log(1+2M/\epsilon) \\
& \lesssim (\epsilon/M)^{-2d/(d+3)} \log(1+M/\epsilon).
\end{align*}

Applying (\ref{Dudley}) to $\cF=\{f\in \cF_{\sigma}(M): \|f\|_{L^2(\mu_n)}\le \delta\}$, we have $D=2\delta$ and 
\[
\cG_n(\cF_{\sigma}(M);\delta,\xi_{1:n}) \lesssim \inf_{\epsilon\ge 0} \left\{ 4\epsilon + \frac{16}{\sqrt{n}} \int_\epsilon^\delta (t/M)^{-d/(d+3)} \sqrt{\log(1+M/t)} dt \right\}.
\]
If we choose $\epsilon \asymp \delta^{\frac{3}{d+3}} M^{\frac{d}{d+3}} n^{-1/2}$, then for $n\ge 2$,
\begin{align*}
\cG_n(\cF_{\sigma}(M);\delta,\xi_{1:n}) &\lesssim \frac{\delta^{\frac{3}{d+3}} M^{\frac{d}{d+3}}}{\sqrt{n}} + \frac{M^{\frac{d}{d+3}}}{\sqrt{n}} \sqrt{\log(nM/\delta)} \int_0^\delta t^{-d/(d+3)} dt \\
&\lesssim \frac{\delta^{\frac{3}{d+3}} M^{\frac{d}{d+3}}}{\sqrt{n}} \sqrt{\log(nM/\delta)}.
\end{align*}

Finally, for the star hull $\star(\cT_B\cF_{\sigma}(M))$, we can similarly derive the bound by estimating the covering number. It is easy to see that the covering number of $\cT_B\cF_{\sigma}(M)$ is not larger than that of $\cF_{\sigma}(M)$. For any $\epsilon>0$, let $\{f_1,\dots,f_m\} \subseteq \cT_B\cF_{\sigma}(M)$ be an $\epsilon/2$-cover of $\cT_B\cF_{\sigma}(M)$ and $\{a_1,\dots,a_k\} \subseteq [0,1]$ be an $\epsilon/(2B)$-cover of $[0,1]$. Then, for any $af\in \star(\cT_B\cF_{\sigma}(M))$, we can choose $a_if_j$ such that 
\[
\|af-a_if_j\|_{L^\infty(\bB^d)} \le |a-a_i|\|f\|_{L^\infty(\bB^d)} + |a_i|\|f-f_j\|_{L^\infty(\bB^d)} \le \epsilon.
\]
This shows that $\{a_if_j:1\le i\le k, 1\le j\le m\}$ is $\epsilon$-cover of $\star(\cT_B\cF_{\sigma}(M))$, which implies
\begin{align*}
&\log \cN(\epsilon, \star(\cT_B\cF_{\sigma}(M)), \|\cdot\|_{L^\infty(\bB^d)}) \\
\le & \log \cN(\epsilon, \cF_{\sigma}(M), \|\cdot\|_{L^\infty(\bB^d)}) + \log (1+B/\epsilon) \\
\lesssim & (\epsilon/M)^{-2d/(d+3)} \log(1+M/\epsilon).
\end{align*}
Hence, we can obtain the same bound for the local complexity.

\subsection{Proof of Lemma \ref{oracle inequality}}\label{sec: proof of oracle ineq}

If $f\in \cF_n$ satisfies $\|\widehat{f}_n - f\|_{L^2(\mu_n)} \le \delta_n$, then 
\begin{align*}
\|\widehat{f}_n-h\|_{L^2(\mu_n)}^2 &\le 2 \|f-h\|_{L^2(\mu_n)}^2 + 2\|\widehat{f}_n - f\|_{L^2(\mu_n)}^2 \\
&\le 2 \|f-h\|_{L^2(\mu_n)}^2 + 2\delta_n^2.
\end{align*}
Thus, we can assume $\|\widehat{f}_n - f\|_{L^2(\mu_n)} > \delta_n$.

Observing that $\widehat{f}_n - f \in \partial \cF_n$, we consider the event 
\[
\cA:= \left\{ \exists g\in \partial \cF_n: \|g\|_{L^2(\mu_n)} > \delta_n, \left|\frac{1}{n} \sum_{i=1}^n \eta_i g(X_i) \right| > 2\delta_n \|g\|_{L^2(\mu_n)} \right\}.
\]
If $\cA$ is true, then there exists $\widetilde{g}:= \frac{\delta_n}{\|g\|_{L^2(\mu_n)}} g \in \star(\partial \cF_n)$ such that $\|\widetilde{g}\|_{L^2(\mu_n)} = \delta_n$ and 
\[
\left|\frac{1}{n} \sum_{i=1}^n \eta_i \widetilde{g}(X_i) \right| = \frac{\delta_n}{\|g\|_{L^2(\mu_n)}} \left|\frac{1}{n} \sum_{i=1}^n \eta_i g(X_i) \right| > 2 \delta_n^2.
\]
This implies that $\bP(\cA) \le \bP(Z_n> 2\delta_n^2)$, where $Z_n$ is the random variable defined by
\[
Z_n:= \sup_{\substack{g\in \star(\partial \cF_n) \\ \|g\|_{L^2(\mu_n)}\le \delta_n}} \left| \frac{1}{n} \sum_{i=1}^n \eta_i g(X_i) \right|.
\]
Observe that we can view $Z_n(\eta_{1:n})$ as a convex Lipschitz function of the random vector $\eta_{1:n}$ supported on $[-2B,2B]^n$ with Lipschitz constant $\delta_n/\sqrt{n}$. By the concentration for convex Lipschitz functions on bounded random variables \citep[Theorem 6.10]{boucheron2013concentration}, we conclude that, for any $t>0$,
\begin{equation}\label{remark}
\bP(Z_n> \bE[Z_n] + t) \le \exp\left(-\frac{nt^2}{32 B^2 \delta_n^2}\right).
\end{equation}
Taking $t=\delta_n^2$ and noticing that $\bE[Z_n] = \cG_n(\star(\partial \cF_n);\delta_n,\eta_{1:n}) \le \delta_n^2$ by assumption, we get
\[
\bP(\cA) \le \bP(Z_n> 2\delta_n^2) \le \bP(Z_n> \bE[Z_n] + \delta_n^2) \le \exp\left(-\frac{n \delta_n^2}{32 B^2}\right).
\]
By the definition of the event $\cA$ and the assumption $\|\widehat{f}_n - f\|_{L^2(\mu_n)} > \delta_n$, the inequality 
\[
\left|\frac{1}{n} \sum_{i=1}^n \eta_i \left(\widehat{f}_n(X_i) - f(X_i)\right) \right| \le 2\delta_n \|\widehat{f}_n - f\|_{L^2(\mu_n)}
\]
holds with probability at least $1-\exp(-n\delta_n^2/(32B^2))$. Combining this bound with the base inequality (\ref{base inequality}), we have
\begin{align*}
\|\widehat{f}_n-h\|_{L^2(\mu_n)}^2 &\le \|f-h\|_{L^2(\mu_n)}^2 + 4\delta_n \|\widehat{f}_n - f\|_{L^2(\mu_n)} \\
&\le \|f-h\|_{L^2(\mu_n)}^2 + 16 \delta_n^2 + \frac{1}{4} \|\widehat{f}_n - f\|_{L^2(\mu_n)}^2 \\
&\le \|f-h\|_{L^2(\mu_n)}^2 + 16 \delta_n^2 + \frac{1}{2} \|\widehat{f}_n - h\|_{L^2(\mu_n)}^2 + \frac{1}{2} \|f - h\|_{L^2(\mu_n)}^2 \\
&= \frac{3}{2} \|f-h\|_{L^2(\mu_n)}^2 + 16 \delta_n^2 + \frac{1}{2} \|\widehat{f}_n - h\|_{L^2(\mu_n)}^2.
\end{align*}
Rearranging yields the desired bound.

\subsection{Proof of of Lemma \ref{oracle inequality reg}}\label{sec: proof of oracle ineq reg}

We modify the proofs of Lemma \ref{oracle inequality} and \citet[Theorem 13.17]{wainwright2019high}. For any $f\in \cF_{n,R}$, let us denote $g_f:= \widehat{f}_{n,\lambda_n} - f$ for convenience. We divide the proof into four cases.

\textbf{Case 1}: $\| g_f\|_{L^2(\mu_n)}^2 + \lambda_n \kappa(g_f) \le \delta_n^2$. We have
\begin{align*}
&\|\widehat{f}_{n,\lambda_n}-h\|_{L^2(\mu_n)}^2 + \lambda_n \kappa(\widehat{f}_{n,\lambda_n}) \\
\le & 2 \|f-h\|_{L^2(\mu_n)}^2 + 2 \|g_f\|_{L^2(\mu_n)}^2 + \lambda_n\kappa(g_f) + \lambda_n\kappa(f) \\
\le & 2 \|f-h\|_{L^2(\mu_n)}^2 + \lambda_n\kappa(f) + 2 \delta_n^2.
\end{align*}

\textbf{Case 2}: $\| g_f\|_{L^2(\mu_n)}^2 + \lambda_n \kappa(g_f) > \delta_n^2$ and $\kappa(\widehat{f}_{n,\lambda_n})\le R$. In this case, $g_f = \widehat{f}_{n,\lambda_n} - f \in \partial \cF_{n,R}$. Consider the event 
\[
\cA_1:= \left\{ \exists g\in \partial \cF_{n,R}: \|g\|_{L^2(\mu_n)}^2 + \lambda_n \kappa(g) > \delta_n^2, \left|\frac{1}{n} \sum_{i=1}^n \eta_i g(X_i) \right| > 2\delta_n \sqrt{\|g\|_{L^2(\mu_n)}^2 + \lambda_n \kappa(g)} \right\}.
\]
If $\cA_1$ is true, then there exists 
\[
\widetilde{g}:= \frac{\delta_n}{\sqrt{\|g\|_{L^2(\mu_n)}^2 + \lambda_n \kappa(g)}} g \in \star(\partial \cF_{n,R}),
\]
such that $\|\widetilde{g}\|_{L^2(\mu_n)} \le \delta_n$ and 
\[
\left|\frac{1}{n} \sum_{i=1}^n \eta_i \widetilde{g}(X_i) \right| = \frac{\delta_n}{\sqrt{\|g\|_{L^2(\mu_n)}^2 + \lambda_n \kappa(g)}} \left|\frac{1}{n} \sum_{i=1}^n \eta_i g(X_i) \right| > 2 \delta_n^2.
\]
This implies that $\bP(\cA_1) \le \bP(Z_n(\delta_n)> 2\delta_n^2)$, where $Z_n(\delta_n)$ is the random variable defined by
\[
Z_n(\delta_n):= \sup_{\substack{g\in \star(\partial \cF_{n,R}) \\ \|g\|_{L^2(\mu_n)}\le \delta_n}} \left| \frac{1}{n} \sum_{i=1}^n \eta_i g(X_i) \right|.
\]

As in the proof of Lemma \ref{oracle inequality}, by viewing $Z_n(\delta_n)$ as a convex Lipschitz function of $\eta_{1:n}$, we get, for any $u>0$,
\[
\bP(Z_n(\delta_n)> \bE[Z_n(\delta_n)] + u) \le \exp\left(-\frac{nu^2}{32 B^2 \delta_n^2}\right).
\]
Taking $u=\delta_n^2$ and noticing that $\bE[Z_n(\delta_n)] = \cG_n(\star(\partial \cF_{n,R};\delta_n,\eta_{1:n}) \le \delta_n^2$, we conclude
\[
\bP(\cA_1) \le \bP(Z_n(\delta_n)> 2\delta_n^2) \le \bP(Z_n(\delta_n)> \bE[Z_n(\delta_n)] + \delta_n^2) \le \exp\left(-\frac{n \delta_n^2}{32 B^2}\right).
\]
This implies that the inequality 
\[
\left|\frac{1}{n} \sum_{i=1}^n \eta_i g_f(X_i) \right| \le 2\delta_n \sqrt{\|g_f\|_{L^2(\mu_n)}^2 + \lambda_n \kappa(g_f)}
\]
holds with probability at least $1-\exp(-n\delta_n^2/(32B^2))$. Combining this with inequality (\ref{base inequality reg c_0}), we have
\begin{align*}
& \|\widehat{f}_{n,\lambda_n}-h\|_{L^2(\mu_n)}^2 + \lambda_n \kappa(\widehat{f}_{n,\lambda_n}) \\
\le & c_0 \left(\|f-h\|_{L^2(\mu_n)}^2 + \lambda_n \kappa(f)\right) + 4 \delta_n \sqrt{\|g_f\|_{L^2(\mu_n)}^2 + \lambda_n \kappa(g_f)} \\
\le & c_0 \left(\|f-h\|_{L^2(\mu_n)}^2 + \lambda_n \kappa(f)\right) + 16 \delta_n^2 + \frac{1}{4} \|\widehat{f}_{n,\lambda_n} - f\|_{L^2(\mu_n)}^2 + \frac{1}{4} \lambda_n \kappa(\widehat{f}_{n,\lambda_n} - f) \\
\le & c_0 \left(\|f-h\|_{L^2(\mu_n)}^2 + \lambda_n \kappa(f)\right) + 16 \delta_n^2 + \frac{1}{2} \|\widehat{f}_{n,\lambda_n} - h\|_{L^2(\mu_n)}^2 + \frac{1}{2} \|f - h\|_{L^2(\mu_n)}^2 \\
& \qquad + \frac{1}{4} \lambda_n \kappa(\widehat{f}_{n,\lambda_n}) + \frac{1}{4} \lambda_n \kappa(f) \\
\le & \frac{1+2c_0}{2} \left(\|f-h\|_{L^2(\mu_n)}^2 + \lambda_n \kappa(f) \right)+ 16\delta_n^2 + \frac{1}{2} \left(\|\widehat{f}_{n,\lambda_n} - h\|_{L^2(\mu_n)}^2 + \lambda_n \kappa(\widehat{f}_{n,\lambda_n}) \right).
\end{align*}
Rearranging shows that the desired bound holds with probability at least $1-\exp(-n\delta_n^2/(32B^2))$.

\textbf{Case 3}: $\kappa(\widehat{f}_{n,\lambda_n})> R$ and $\| g_f\|_{L^2(\mu_n)} \le \delta_n \kappa(\widehat{f}_{n,\lambda_n})/R$. Observe that
\begin{align*}
&\widetilde{g}_f:= \frac{R}{\kappa(\widehat{f}_{n,\lambda_n})}g_f = \frac{R}{\kappa(\widehat{f}_{n,\lambda_n})} (\widehat{f}_{n,\lambda_n} - f) \in \partial \cF_{n,R}, \\
&\|\widetilde{g}_f\|_{L^2(\mu_n)} = \frac{R}{\kappa(\widehat{f}_{n,\lambda_n})}\| g_f\|_{L^2(\mu_n)} \le \delta_n.
\end{align*}
We have shown in Case 2 that $\bP(Z_n(\delta_n)> 2\delta_n^2) \le \exp(-n\delta_n^2/(32B^2))$. This implies that the inequality
\[
\left|\frac{1}{n} \sum_{i=1}^n \eta_i g_f(X_i) \right| = \frac{\kappa(\widehat{f}_{n,\lambda_n})}{R} \left|\frac{1}{n} \sum_{i=1}^n \eta_i \widetilde{g}_f(X_i) \right|\le 2\delta_n^2 \frac{\kappa(\widehat{f}_{n,\lambda_n})}{R}
\]
holds with probability at least $1-\exp(-n\delta_n^2/(32B^2))$. By inequality (\ref{base inequality reg c_0}), 
\begin{align*}
\|\widehat{f}_{n,\lambda_n}-h\|_{L^2(\mu_n)}^2 + \lambda_n \kappa(\widehat{f}_{n,\lambda_n}) & \le c_0 \left(\|f-h\|_{L^2(\mu_n)}^2 + \lambda_n \kappa(f)\right) + \frac{4\delta_n^2}{R}\kappa(\widehat{f}_{n,\lambda_n}) \\
& \le c_0 \left(\|f-h\|_{L^2(\mu_n)}^2 + \lambda_n \kappa(f)\right) + \frac{1}{2}\lambda_n \kappa(\widehat{f}_{n,\lambda_n}),
\end{align*}
where we use the assumption $\lambda_n \ge 8\delta_n^2/R$ in the last inequality. Therefore, 
\[
\|\widehat{f}_{n,\lambda_n}-h\|_{L^2(\mu_n)}^2 + \lambda_n \kappa(\widehat{f}_{n,\lambda_n}) \le 2c_0 \left( \|f-h\|_{L^2(\mu_n)}^2 + \lambda_n \kappa(f) \right),
\]
holds with probability at least $1-\exp(-n\delta_n^2/(32B^2))$.

\textbf{Case 4}: $\kappa(\widehat{f}_{n,\lambda_n})> R$ and $\| g_f\|_{L^2(\mu_n)} > \delta_n \kappa(\widehat{f}_{n,\lambda_n})/R$. In this case,
\[
\widetilde{g}_f= \frac{R}{\kappa(\widehat{f}_{n,\lambda_n})}g_f \in \partial \cF_{n,R}, \qquad \|\widetilde{g}_f\|_{L^2(\mu_n)} > \delta_n.
\]
We are going to show that the event
\[
\cA_2:= \left\{ \exists g\in \partial \cF_{n,R}: \|g\|_{L^2(\mu_n)} > \delta_n, \left|\frac{1}{n} \sum_{i=1}^n \eta_i g(X_i) \right| > 2\delta_n \|g\|_{L^2(\mu_n)} + \frac{1}{16} \|g\|_{L^2(\mu_n)}^2 \right\}
\]
holds with small probability. We will prove it by a ``peeling'' argument. To do this, let us denote $t_m = 2^m \delta_n$ for $m=0,1,\dots$, and 
\begin{align*}
\cA_2(t_m):=& \Bigg\{ \exists g\in \partial \cF_{n,R}: t_m<\|g\|_{L^2(\mu_n)}\le 2t_m, \Bigg. \\
&\qquad \qquad\left. \left|\frac{1}{n} \sum_{i=1}^n \eta_i g(X_i) \right| > 2\delta_n \|g\|_{L^2(\mu_n)} + \frac{1}{16} \|g\|_{L^2(\mu_n)}^2 \right\},
\end{align*}
then we have the decomposition $\cA_2 = \cup_{m=0}^\infty \cA_2(t_m)$. To estimate $\bP(\cA_2(t_m))$, we consider the random variables 
\[
Z_n(t):= \sup_{\substack{g\in \star(\partial \cF_{n,R}) \\ \|g\|_{L^2(\mu_n)}\le t}} \left| \frac{1}{n} \sum_{i=1}^n \eta_i g(X_i) \right|, \quad t\ge \delta_n.
\]
Notice that, if the event $\cA_2(t_m)$ holds, then there exits some function $g\in \partial \cF_{n,R}$ with $\|g\|_{L^2(\mu_n)} \in (t_m,t_{m+1}]$ such that 
\begin{align*}
\left|\frac{1}{n} \sum_{i=1}^n \eta_i g(X_i) \right| &> 2\delta_n \|g\|_{L^2(\mu_n)} + \frac{1}{16} \|g\|_{L^2(\mu_n)}^2 \\
& \ge 2t_m \delta_n + \frac{1}{16} t_m^2 = t_{m+1}\delta_n + \frac{1}{64} t_{m+1}^2,
\end{align*}
where we use $t_{m+1}=2t_m$. This lower bound implies 
\[
\bP(\cA_2(t_m)) \le \bP\left(Z_n(t_{m+1}) > t_{m+1}\delta_n + \frac{1}{64} t_{m+1}^2\right).
\]
As in Case 2, by viewing $Z_n(t)$ as a convex Lipschitz function of $\eta_{1:n}$, one obtain 
\[
\bP(Z_n(t)> \bE[Z_n(t)] + u) \le \exp\left(-\frac{nu^2}{32 B^2 t^2}\right),\quad \forall u>0.
\]
Since $\star(\partial \cF_{n,R})$ is star-shaped, the function $\delta \mapsto \cG_n(\star(\partial \cF_{n,R});\delta,\eta_{1:n})/\delta$ is non-increasing. Hence, for any $t\ge \delta_n$,
\[
\bE[Z_n(t)] = \cG_n(\star(\partial \cF_{n,R});t,\eta_{1:n}) \le t \frac{\cG_n(\star(\partial \cF_{n,R});\delta_n,\eta_{1:n})}{\delta_n} \le t\delta_n.
\]
Using this upper bound on the mean and setting $u=t^2/64$, we get
\[
\bP \left(Z_n(t)> t\delta_n + \frac{1}{64}t^2 \right) \le \exp\left(-\frac{nt^2}{2^{17}B^2}\right).
\]
As a consequence,
\begin{align*}
\bP(\cA_2) &\le \sum_{m=0}^\infty \bP(\cA_2(t_m)) \le \sum_{m=1}^\infty \bP\left(Z_n(t_m) > t_m\delta_n + \frac{1}{64} t_m^2\right) \\
& \le \sum_{m=1}^\infty \exp\left(-\frac{nt_m^2}{2^{17}B^2}\right) = \sum_{m=1}^\infty \exp\left(-\frac{2^{2m}n\delta_n^2}{2^{17}B^2}\right) \\
&\le c_1 \exp\left(-\frac{c_2n\delta_n^2}{B^2}\right),
\end{align*}
for some constant $c_1,c_2>0$. 

By the definition of the event $\cA_2$ and the assumption that $\widetilde{g}_f\in \partial \cF_{n,R}$ and $\|\widetilde{g}_f\|_{L^2(\mu_n)} > \delta_n$, we have shown
\[
\left|\frac{1}{n} \sum_{i=1}^n \eta_i \widetilde{g}_f(X_i) \right| \le 2\delta_n \|\widetilde{g}_f\|_{L^2(\mu_n)} + \frac{1}{16} \|\widetilde{g}_f\|_{L^2(\mu_n)}^2
\]
holds with probability at least $1-c_1 \exp(-c_2 n\delta_n^2/B^2)$. Multiplying both sides by $\kappa(\widehat{f}_{n,\lambda_n})/R$, we obtain
\begin{align*}
\left|\frac{1}{n} \sum_{i=1}^n \eta_i g_f(X_i) \right| &\le 2\delta_n \|g_f\|_{L^2(\mu_n)} + \frac{1}{16} \frac{R}{\kappa(\widehat{f}_{n,\lambda_n})} \|g_f\|_{L^2(\mu_n)}^2 \\
&\le 2\delta_n \|g_f\|_{L^2(\mu_n)} + \frac{1}{16} \|g_f\|_{L^2(\mu_n)}^2,
\end{align*}
where we use $\kappa(\widehat{f}_{n,\lambda_n})>R$ in the last inequality. Combining this bound with inequality (\ref{base inequality reg c_0}), we have
\begin{align*}
& \|\widehat{f}_{n,\lambda_n}-h\|_{L^2(\mu_n)}^2 + \lambda_n \kappa(\widehat{f}_{n,\lambda_n}) \\
\le & c_0 \left(\|f-h\|_{L^2(\mu_n)}^2 + \lambda_n \kappa(f)\right) + 4\delta_n \|g_f\|_{L^2(\mu_n)} + \frac{1}{8} \|g_f\|_{L^2(\mu_n)}^2 \\
\le & c_0 \left(\|f-h\|_{L^2(\mu_n)}^2 + \lambda_n \kappa(f)\right) + 32 \delta_n^2 + \frac{1}{4} \|g_f\|_{L^2(\mu_n)}^2 \\
\le & c_0 \left(\|f-h\|_{L^2(\mu_n)}^2 + \lambda_n \kappa(f)\right) + 32 \delta_n^2 + \frac{1}{2} \|\widehat{f}_{n,\lambda_n} - h\|_{L^2(\mu_n)}^2 + \frac{1}{2} \|f - h\|_{L^2(\mu_n)}^2 \\
\le & \frac{1+2c_0}{2} \left(\|f-h\|_{L^2(\mu_n)}^2 + \lambda_n \kappa(f) \right)+ 32 \delta_n^2 + \frac{1}{2} \left(\|\widehat{f}_{n,\lambda_n} - h\|_{L^2(\mu_n)}^2 + \lambda_n \kappa(\widehat{f}_{n,\lambda_n}) \right),
\end{align*}
which yields the desired bound by rearrangements.

\acks{The work described in this paper was partially supported by InnoHK initiative, The Government of the HKSAR, Laboratory for AI-Powered Financial Technologies, the Research Grants Council of Hong Kong [Projects No. CityU 11306220 and 11308020] and National Natural Science Foundation of China [Project No. 12371103] when the second author worked at City University of Hong Kong. We thank the referees for their helpful comments and suggestions on the paper.}

\bibliography{Ref}
\end{document}